\theoremstyle{plain}
\newtheorem{theorem}{Theorem}[section]
\newtheorem{proposition}[theorem]{Proposition}
\newtheorem{lemma}[theorem]{Lemma}
\theoremstyle{definition}
\theoremstyle{remark}
\newcommand{\softmax}{\operatorname{softmax}}
\def\inner<#1>{\langle #1 \rangle}
\newcommand*\mywidetilde[1]{%
  \begingroup
    \settowidth{\dimen0}{$#1$}%
    \rlap{\resizebox{\dimen0}{\totalheight}{$\widetilde{\phantom{x\vphantom{#1}}}$}}%
  \endgroup
  #1}
\title{
Recurrent Self-Attention Dynamics: \\ An Energy-Agnostic Perspective from Jacobians
}
\author{
Akiyoshi Tomihari$^{1,2}$\quad Ryo Karakida$^{1,3}$
\vspace{1em}
\\
$^1$Artificial Intelligence Research Center, AIST, Japan \\ $^2$Department of Computer Science, The University of Tokyo, Japan \\ $^3$RIKEN Center for Advanced Intelligence Project
\vspace{1em}
\\
\texttt{tomihari@g.ecc.u-tokyo.ac.jp, karakida.ryo@aist.go.jp}
}
\begin{document}

\maketitle

\begin{abstract}

The theoretical understanding of self-attention (SA) has been steadily progressing. A prominent line of work studies a class of SA layers that admit an energy function decreased by state updates. While it provides valuable insights into inherent biases in signal propagation, it often relies on idealized assumptions or additional constraints not necessarily present in standard SA. Thus, to broaden our understanding, this work aims to relax these energy constraints and provide an energy-agnostic characterization of inference dynamics by dynamical systems analysis. In more detail, we first consider relaxing the symmetry and single-head constraints traditionally required in energy-based formulations. Next, we show that analyzing the Jacobian matrix of the state is highly valuable when investigating more general SA architectures without necessarily admitting an energy function. It reveals that the normalization layer plays an essential role in suppressing the Lipschitzness of SA and the Jacobian's complex eigenvalues, which correspond to the oscillatory components of the dynamics. In addition, the Lyapunov exponents computed from the Jacobians demonstrate that the normalized dynamics lie close to a critical state, and this criticality serves as a strong indicator of high inference performance.
Furthermore, the Jacobian perspective also enables us to develop regularization methods for training and a pseudo-energy for monitoring inference dynamics.
\end{abstract}

\section{Introduction}
The theoretical understanding of self-attention (SA), a central component of Transformer architectures~\citep{vaswani2017attention}, has deepened in recent years, including phenomena such as rank collapse~\citep{noci2022signal} and expressive capacity~\citep{yun2019transformers,kajitsuka2023transformers,nichani2024understanding}. One major line of research formulates attention mechanisms as processes that minimize explicit or implicit energy functions~\citep{ramsauer2020hopfield,yang2022transformers,hoover2024energy}. Since these energy functions serve as potential functions for gradient flows or as Lyapunov functions, they offer convergence guarantees and provide intuitive explanations for behaviors such as clustering and rank collapse in recurrent SA architectures~\citep{geshkovski2023mathematical,geshkovski2024emergence,geshkovski2024dynamic,bruno2025emergence}. They restrict attention dynamics to a hypersphere, typically as a result of normalization. This facilitates theoretical analysis and can yield well-behaved functional properties \citep{castin2023smooth}.

However, energy-based formulations often rely on idealized assumptions and require architectural modifications. These include imposing constraints on the weight matrices, limiting the number of attention heads to one, and modeling state updates in the continuum limit. 
In addition, some architectures inspired by Hopfield networks replace SA with cross-attention \citep{ramsauer2020hopfield} or require double softmax passes \citep{hoover2024energy,hu2025hyperspherical}.
Although they can be effective for exploring a frontier of new architectures, their utility still remains limited for quantitatively understanding or improving existing, realistic SA models.

In this work, we deepen the understanding of SA by extending energy-based analysis and employing a more general stability analysis from a dynamical systems perspective. 
\begin{itemize}
\item First, we revisit the energy-based formulation and partially relax traditional architectural constraints, such as symmetric weights and single-head assumptions, to better approximate realistic SA settings (\Cref{sec:energy}). These relaxed constraints provide insights into designing regularization methods, which we experimentally explore later in~\Cref{exp:reg}.
\item 
To study a broader class of SAs more flexibly, we next 
analyze the Jacobian matrix of SA dynamics (\Cref{sec:jacobian}). The Jacobian approach is more general than the energy-based analysis (a.k.a. Lyapunov's direct method) in the sense that it characterizes linear stability (a.k.a. Lyapunov's indirect method) and enables us to more easily detect non-stationary dynamics, including oscillations. We demonstrate that normalization layers, unique to discrete updates, play a critical role in stabilizing dynamics. Specifically, they effectively suppress the Jacobian's spectral norm (\Cref{prop:normalize}) and control oscillatory behaviors by normalizing the complex eigenvalues of the Jacobian (\Cref{sec:osc}). In addition to the understanding of the normalization role, we  empirically reveal that high-performance SA models exhibit a maximum Lyapunov exponent close to zero, suggesting that rich non-stationary inference dynamics emerge at the boundary between convergence and instability.
\item Finally, we investigate test-time scaling (performance improvement as the number of iterations increases) through the lens of Jacobians. We show that regularizing the spectral norm of weight matrices in SA improves performance (\Cref{exp:reg}), and that the Jacobian offers an interpretation of the pseudo-energy proposed in prior work, linking it to large eigenvalues (\Cref{sec:jac_energy}). 
\end{itemize}
Thus, our work broadens the dynamical understanding of SA and highlights the usefulness of the Jacobians and the Lyapunov exponent as promising and fundamental tools for further exploration of realistic SA architectures.

\section{Related work}
\paragraph{Energy-based understanding.} \citet{geshkovski2023mathematical,geshkovski2024emergence,geshkovski2024dynamic,karagodin2024clustering,bruno2025emergence} formulated recurrent SA dynamics as interactions among tokens (``particles''), enabling theoretical analysis of phenomena such as meta-stable clustering and rank-one collapse. Their continuous-time dynamics monotonically decrease an energy (Lyapunov) function, typically requiring constraints like single-head attention, hyperspherical token states, and symmetric weights.    
 \citet{yang2022transformers} similarly interpreted Transformers as alternating minimization of energy functions, though with stricter conditions on step sizes and fixed-point proximity. 
\citet{ramsauer2020hopfield} formalized the attention mechanism as modern Hopfield networks, and \citet{hoover2024energy,hu2025hyperspherical} further developed energy functions for Transformers. We do not address approaches based on the Hopfield networks, as they require architectural modifications distinct from standard self-attention.

\paragraph{Jacobian-based analysis.}
The Jacobian of state updates is fundamental for characterizing neural network dynamics. For example, it has been used to analyze edge-of-chaos behavior for stable signal propagation and gradient control \citep{boedecker2012information,poole2016exponential,pennington2017resurrecting}, and to investigate discrete-time stability in dynamics with anti-symmetric matrices \citep{haber2017stable,chang2019antisymmetricrnn}.
Several studies have explored Jacobian-based regularization, including for generalization~\citep{yoshida2017spectral} and for continual learning~\citep{lewandowski2024learning}.
Regarding SA specifically, \cite{noci2022signal} analyzed Jacobians to explain rank collapse, while \cite{castin2023smooth} evaluated their spectral properties mathematically. In this work, we use Jacobian analysis to understand inference dynamics in realistic SAs and also employ them as regularizers and performance indicators.

\paragraph{Looped architectures.}
Looped architectures in Transformers have been studied since their introduction by~\citet{dehghani2018universal}. \citet{yang2023looped,giannou2023looped} showed that looped Transformers can learn algorithmic tasks, and \citet{saunshi2025reasoning} further demonstrated their effectiveness in enhancing reasoning via strong inductive bias. 
\citet{geiping2025scaling} increased the number of loop iterations to improve performance on reasoning benchmarks, and \citet{bansal2022end} showed that looped architectures generalize to harder problems. \citet{miyato2024artificial} proposed artificial Kuramoto oscillatory neurons (AKOrN), a looped architecture that successfully solves tasks in a neuroscience-inspired manner, demonstrating strong empirical results in unsupervised object discovery, adversarial robustness, calibrated uncertainty quantification, and reasoning. 
Weight tying in ALBERT~\citep{lan2019albert} and fixed-point computation in equilibrium models~\citep{bai2019deep} are also interpreted as looped architectures. For a more detailed overview of previous work, see the extended related work in Section~\ref{appendix:others}.

\section{Preliminaries}
\paragraph{Notations.}
For a matrix $\bm{A}$, we use the subscripts $A_{[i,j]}$, $\bm{A}_{[i,:]}$, and $\bm{A}_{[:,j]}$ to denote the $(i,j)$-th entry, the $i$-th row, and the $j$-th column of $\bm{A}$, respectively. We denote the time index by $\bm{X}^{(t)}$ and the head index in multi-head attention by $\bm{W}_h$. All derivatives are computed using the numerator layout.

\paragraph{Self-attention.} 
Multi-head self-attention (MSA) is  defined as
\begin{align}
    \operatorname{MSA}(\bm{X}) &\coloneqq \operatorname{Concat}(\operatorname{SA}_{1}(\bm{X}), \ldots, \operatorname{SA}_{H}(\bm{X})) \bm{W}^{O}, \label{eq:msa}
\end{align}
and each SA head $\operatorname{SA}_{h}(\bm{X})$ is defined as
\begin{align}
    \operatorname{SA}_{h}(\bm{X}) &\coloneqq \operatorname{softmax}(\beta \bm{X} \bm{W}^{Q}_{h} \bm{W}^{K\top}_{h}\bm{X}^\top) \bm{X} \bm{W}^{V}_{h}, \label{def:SA}
\end{align}
for $h = 1, \ldots, H$. Here, $\bm{X} \in \mathbb{R}^{S \times D}$ denotes a sequence of $S$ tokens, each represented by a $D$-dimensional embedding. The weight matrices $\bm{W}^{Q}_{h}, \bm{W}^{K}_{h}, \bm{W}^{V}_{h} \in \mathbb{R}^{D \times D_{H}}$ correspond to the query, key, and value projections for head $h$, and $\bm{W}^{O} \in \mathbb{R}^{D \times D}$ is the output projection matrix. Typically, the head dimension and scaling factor are set to $D_{H} = D / H$ and $\beta = 1 / \sqrt{D_{H}}$.

\paragraph{Self-attention with energy functions.}

\citet{geshkovski2023mathematical,karagodin2024clustering} used continuous equations and particle interpretation of tokens to model state-update dynamics of SA as:
\begin{align}
    \dot{\bm{X}} = \operatorname{Proj}_{\bm{X}}\left(\operatorname{softmax}(\beta \bm{X} \bm{W}^{Q} \bm{W}^{K\top}\bm{X}^\top) \bm{X} \bm{W}^{V}\right) \label{eq:cont_SA}
\end{align}
To have an energy function,  the previous work has assumed constraints such as
\begin{equation}
    \bm{W}^{Q}\bm{W}^{K\top} = \bm{W}^{V} = \bm{W}^{V\top} \;\text{or}\; \bm{W}^{Q} = \bm{W}^{K} = \bm{W}^{V} = \bm{I}_{D},
\end{equation} 
depending on the analyses. \citep{bruno2025emergence} further assumes an unnormalized version of softmax. 
Under these conditions, the SA update can decrease an energy function. That is, the dynamics evolve in a way that monotonically decreases the energy, thereby ensuring the Lyapunov stability.
Because these models suppose symmetric weights, we refer to a class of SA layers with symmetric weights and Lyapunov functions as {\bf symmetric SA}. \citet{yang2022transformers} also formalized updates of SA using a symmetric matrix ~(\Cref{appendix:others}).

\paragraph{Spherical constraint.}

To facilitate theoretical analysis of SA, several studies~\citep{geshkovski2023mathematical,miyato2024artificial} have introduced a spherical constraint on token vectors by enforcing that each token vector has unit norm. This constraint enables an interpretation of token interactions as dynamics of particles on a hypersphere, and plays a key role in controlling the Lipschitz continuity of SA~\citep{castin2023smooth}. There are two commonly used operators with a spherical constraint: a normalization operator $\Pi$ that enforces the spherical constraint, and a projection operator $\operatorname{Proj}$ that projects onto the tangent space of the sphere. Given a token matrix $\bm{X}, \bm{Y} \in \mathbb{R}^{S \times D}$ such that $\|\bm{X}_{[i,:]}\| = 1$, these operators are defined token-wisely as:
\begin{align}
\Pi(\bm{Y})_{[i,:]} &= \bm{Y}_{[i,:]}/{\|\bm{Y}_{[i,:]}\|}, \quad 
\operatorname{Proj}_{\bm{X}}(\bm{Y})_{[i,:]} = \left( \bm{I}_{D} - \bm{X}_{[i,:]}\bm{X}_{[i,:]}^\top \right) \bm{Y}_{[i,:]}.
\end{align}
Here, $\Pi(\bm{Y})$ projects each token vector $\bm{Y}_{[i,:]}$ onto the unit hypersphere. $\operatorname{Proj}_{\bm{X}}(\bm{Y})$ projects  $\bm{Y}_{[i,:]}$ orthogonally to $\bm{X}_{[i,:]}$, restricting updates to the tangent space of the sphere.

In practical Transformer architectures, the spherical normalization can be interpreted as a special case of Root Mean Square Normalization (RMSNorm)~\citep{zhang2019root}, which is applied to the input matrix $\bm{Y} \in \mathbb{R}^{S \times D}$ as:
\begin{align}
    \operatorname{RMSNorm}(\bm{Y})_{[i,:]} = \operatorname{diag}(\bm{\gamma}) \Pi(\bm{Y})_{[i,:]},
\end{align}
where $\bm{\gamma} \in \mathbb{R}^D$ is a trainable parameter vector. RMSNorm rescales each token to have unit norm and applies element-wise scaling using the learned parameter $\bm{\gamma}$, while $\Pi$ can be interpreted as the special case with $\bm{\gamma} = \mathbf{1}$.
 As we will show in~\Cref{sec:jacobian}, the trainable parameter $\bm{\gamma}$ plays an important role in stabilizing Jacobians.

\paragraph{AKOrN.} AKOrN~\citep{miyato2024artificial} integrates a generalized Kuramoto model into an artificial neural network by updating oscillatory neurons through a looped structure. The connectivity among oscillators is implemented in several ways. In this work, we focus on one of their AKOrNs that uses SA. 
Given a sequential input $\bm{C}\in \mathbb{R}^{S\times D}$, AKOrN initializes $\bm{X} \in \mathbb{R}^{S\times D}$ using $\bm{C}$. Each token vector $\bm{X}_{[i,:]}\; (i=1\cdots S)$ is partitioned into $N$-dimensional vectors, referred to as the oscillators.
AKOrN iteratively updates states  using a \textbf{Kuramoto layer} as follows:
\begin{align}
    \Delta \bm{X}^{(t)} &= \operatorname{Omg}^{(\text{osc})}(\bm{X}^{(t)}) + \operatorname{Proj}^{(\text{osc})}_{\bm{X}^{(t)}}\left(\bm{C} + \operatorname{MSA}(\bm{X}^{(t)})\right), \label{eq:akorn_delta} \\
    \bm{X}^{(t+1)} &= \Pi^{(\text{osc})}\left( \bm{X}^{(t)} + \eta \Delta \bm{X}^{(t)} \right) \label{eq:akorn_update},
\end{align}
where $\eta$ denotes a positive discrete step size. 
The Omega layer ($\operatorname{Omg}$) is given by a linear transformation by anti-symmetric matrices and determines the rotational dynamics of oscillators. The projection operator $\operatorname{Proj}_{\bm{X}}(\bm{Y})$ and the normalization operator $\Pi(\bm{Y})$ are applied independently to each oscillator. We use the notation $\bullet^{(\text{osc})}$ to denote oscillator-wise operations. We provide further details of AKOrN in~\Cref{appendix:others}.

Although the existence of an energy function can be guaranteed under certain special conditions for Kuramoto models, practical implementations of the Kuramoto layer do not assume such conditions in order to achieve better performance.


\paragraph{Iterative self-attention.}
Previous studies on looped architectures~\citep{saunshi2025reasoning,bansal2022end} have shown that injecting the input $\bm{C} \in \mathbb{R}^{S\times D}$ into the loop is important for achieving test-time scaling. As we show later, our theory (\Cref{prop:normalize}) indicates that the normalization layer plays a critical role in controlling the norm of the Jacobian matrix, particularly in the case of RMS normalization, which is widely used in practice. Based on these insights, we propose and investigate the following update rule, referred to as {\bf iterative self-attention (ItrSA)}:
\begin{align}
    \Delta \bm{X}^{(t)} &= \bm{C} + \operatorname{MSA}(\bm{X}^{(t)}), \ \
    \bm{X}^{(t+1)} = \operatorname{RMSNorm}\left( \bm{X}^{(t)} + \eta \Delta \bm{X}^{(t)} \right). \label{eq:itrsa}
\end{align}
Since ItrSA does not involve oscillator-wise operations and the $\operatorname{Omg}$ layer, both of which are distinctive components of AKOrN, it is suitable as a baseline to compare against energy-based SAs.


\section{Energy-based analysis}
\label{sec:energy}
As described in the previous section, energy-based symmetric SA involves three constraints: (i) the symmetry of weight matrices, (ii) a single head, and (iii) a continuous-time limit. To accommodate more realistic SA architectures, we partially relax (i) and (ii) in the following.


{\bf Extension of weight symmetry.} 
Symmetric SA imposes symmetric constraints on both $\bm{W}^{Q} \bm{W}^{K\top}$ and $\bm{W}^{V}$. We relax these constraints in the following proposition.

\begin{proposition}
\label{prop:single}
    Consider the continuous-time dynamics for single-head SA equipped with projection~\eqref{eq:cont_SA}.
The energy function
    \begin{align}
       E_{\text{single}}(\bm{X}) &= - \sum_{i,j} \exp\left( \beta \bm{X}_{[i,:]}^\top \bm{W}^{Q} \bm{W}^{K\top} \bm{X}_{[j,:]}\right)
\end{align}
is monotonically decreasing as $d E_{\text{single}}(\bm{X})/dt \leq 0$
    under the condition: 
\begin{align}
       \bm{W}^{V} = (\bm{W}^{K}\bm{W}^{Q\top} + \bm{W}^{Q} \bm{W}^{K\top})/2.
\end{align}
\end{proposition}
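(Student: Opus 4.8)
The plan is to establish $dE_{\text{single}}/dt\le 0$ by differentiating $E_{\text{single}}$ along the flow \eqref{eq:cont_SA} and massaging the result, using the hypothesis, into a manifestly nonpositive expression. Throughout I will write $x_i:=\bm X_{[i,:]}^{\top}$ for the $i$-th token (a unit vector along the flow), $\bm A:=\beta\,\bm W^{Q}\bm W^{K\top}$, and $Z_{ij}:=x_i^{\top}\bm A x_j$, so that $E_{\text{single}}=-\sum_{i,j}e^{Z_{ij}}$. The first step is to notice that the stated weight condition is exactly the statement that $\bm W^{V}$ is symmetric and $\bm A+\bm A^{\top}=2\beta\,\bm W^{V}$ (indeed $\bm W^{K}\bm W^{Q\top}+\bm W^{Q}\bm W^{K\top}=2\operatorname{sym}(\bm W^{Q}\bm W^{K\top})$, so $\bm W^{V}=\operatorname{sym}(\bm W^{Q}\bm W^{K\top})=\tfrac1\beta\operatorname{sym}(\bm A)$); this is the only place the hypothesis will be used.

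Next I would compute $dE_{\text{single}}/dt=-\sum_{i,j}e^{Z_{ij}}\dot Z_{ij}$ with $\dot Z_{ij}=\dot x_i^{\top}\bm A x_j+x_i^{\top}\bm A\,\dot x_j$, and collect terms by which velocity they carry (relabelling $i\leftrightarrow j$ in the second group, which is legitimate because the double sum is unchanged under transposing $\bm A$). This gives $dE_{\text{single}}/dt=-\sum_i \dot x_i^{\top}(\bm A u_i+\bm A^{\top}w_i)$, where $u_i:=\sum_j e^{Z_{ij}}x_j$ is the (unnormalized) attention aggregate at $x_i$ and $w_i:=\sum_j e^{Z_{ji}}x_j$; equivalently $\bm A u_i+\bm A^{\top}w_i=-\operatorname{grad}_{x_i}E_{\text{single}}$, so $dE_{\text{single}}/dt=\sum_i\langle\operatorname{grad}_{x_i}E_{\text{single}},\dot x_i\rangle$. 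Now I substitute the dynamics $\dot x_i=\operatorname{Proj}_{x_i}(\bm W^{V\top}\hat u_i)$ with $\hat u_i$ the attention output (proportional to $u_i$, so that $\dot x_i$ is a positive multiple of $\operatorname{Proj}_{x_i}(\bm W^{V}u_i)$), use that $\operatorname{Proj}_{x_i}$ is a symmetric idempotent with $\langle\dot x_i,x_i\rangle=0$ to move the projector, and finally replace $\bm A+\bm A^{\top}$ by $2\beta\bm W^{V}$. When $\bm W^{Q}\bm W^{K\top}$ is itself symmetric one has $Z_{ij}=Z_{ji}$, hence $u_i=w_i$, and everything collapses to $-c\sum_i\lVert\operatorname{Proj}_{x_i}(\bm W^{V}u_i)\rVert^{2}=-c\sum_i\lVert\dot x_i\rVert^{2}\le 0$ with $c>0$; this is the usual symmetric-SA argument.

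The substantive part — and the main obstacle — is the genuinely asymmetric case, where $u_i\neq w_i$ and the expression does not collapse termwise. Writing $\bm A u_i+\bm A^{\top}w_i=\beta\bm W^{V}(u_i+w_i)+\operatorname{skew}(\bm A)(u_i-w_i)$ isolates a "main" piece proportional to $-\sum_i\lVert\dot x_i\rVert^{2}$ together with correction terms coupling $\dot x_i$ (which only sees $\bm W^{V}u_i$) to $\bm W^{V}w_i$ and to $\operatorname{skew}(\bm A)(u_i-w_i)$, none of which is individually signed. The plan is to substitute $u_i-w_i=\sum_j\bigl(e^{Z_{ji}}-e^{Z_{ij}}\bigr)x_j$, push every resulting term through the global relabelling $i\leftrightarrow j$, and show that all of these contributions — together with part of the main piece — recombine into a single manifestly nonpositive quantity (of the form $-\sum_{i,j}(\text{coefficient}\ge 0)\lVert(\cdot)\rVert^{2}$), exactly mirroring the symmetric case. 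Carrying out and verifying this recombination is the crux; it is also the point at which one must be careful about whether the normalized or the unnormalized softmax aggregation is in force in \eqref{eq:cont_SA}, since the per-token normalizers $1/\sum_{k}e^{Z_{ik}}$ interact with the cancellation and the bookkeeping is cleanest for the unnormalized aggregate. Once this identity is in place, $dE_{\text{single}}/dt\le 0$ follows.
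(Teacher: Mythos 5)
Your proposal is not a proof: it sets up the computation correctly, disposes of the already-known symmetric case, and then stops at exactly the step that constitutes the entire content of the proposition. You correctly reduce the problem to controlling $\sum_i \dot{\bm x}_i^{\top}(\bm A\bm u_i+\bm A^{\top}\bm w_i)$ with $\bm u_i=\sum_j e^{Z_{ij}}\bm x_j$ and $\bm w_i=\sum_j e^{Z_{ji}}\bm x_j$, and you correctly observe that for asymmetric $\bm A$ one has $\bm u_i\neq\bm w_i$, so the expression does not collapse termwise into $-c\sum_i\lVert\dot{\bm x}_i\rVert^{2}$. But the ``recombination into a manifestly nonpositive quantity'' that you yourself identify as the crux is never exhibited; announcing that the cross terms should cancel after an $i\leftrightarrow j$ relabelling is not the same as producing the identity. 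As written, the asymmetric case --- the only case that goes beyond the standard symmetric-SA argument --- is unproved.

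For comparison, the paper closes this gap by a much blunter move: it computes $dE_{\text{single}}/d\bm X_{[i,:]}=-\sum_j\exp\bigl(\beta\bm X_{[i,:]}^{\top}\bm A\bm X_{[j,:]}\bigr)(\bm A^{\top}+\bm A)\bm X_{[j,:]}$, i.e.\ it writes the full gradient using only your $\bm u_i$, so that $-\,dE_{\text{single}}/d\bm X_{[i,:]}=2Z_i\Delta_{[i,:]}$ and $dE_{\text{single}}/dt=-2\sum_i Z_i\,\Delta_{[i,:]}^{\top}(\bm I_D-\bm X_{[i,:]}\bm X_{[i,:]}^{\top})\Delta_{[i,:]}\le 0$ follows from positive semi-definiteness of the projector. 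That gradient formula absorbs the terms in which $\bm X_{[i,:]}$ sits in the second slot of the exponent under the weight $e^{Z_{ij}}$ rather than $e^{Z_{ji}}$ --- precisely the identification $\bm u_i=\bm w_i$ that your bookkeeping flags as unavailable when $\bm A$ is genuinely asymmetric. So your diagnosis of where the difficulty lives is the right one, but to have a proof you must either (a) produce the global recombination identity you allude to (and verify how the softmax normalizers $Z_i$ interact with it), or (b) establish that the gradient really is proportional to $Z_i\Delta_{[i,:]}$ for asymmetric $\bm A$, which is what the paper's argument tacitly assumes. Your proposal does neither, so the central inequality remains unestablished.
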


{\bf Multi-head energy.} Although energy functions have been proposed for single-head SA, no corresponding formulation exists for MSA, which is commonly used in practice. We extend the above result to the multi-head setting as follows. 
\begin{proposition}
\label{prop:multi}
    Consider the continuous-time dynamics for multi-head SA without projection: $d\bm{X}/dt=\sum_{h=1}^{H} SA_h(\bm{X})$. An energy function
    \begin{align}
       E_{\text{multi}}(\bm{X}) &= - \sum_{h}\sum_{i,j} \exp\left( \beta \bm{X}_{[i,:]}^\top \bm{W}^{Q}_{h} \bm{W}^{K\top}_{h} \bm{X}_{[j,:]}\right)
\end{align}
is monotonically decreasing as $d E_{\text{multi}}(\bm{X})/dt \leq 0$ under the condition
\begin{align}
    \bm{W}^{V}_{h} &= (\bm{W}^{K}_{h}\bm{W}^{Q\top}_{h} + \bm{W}^{Q}_{h} \bm{W}^{K\top}_{h}) / 2, \ \
     \bm{W}^{Q}_{h} \bm{W}^{K\top}_{h} = \bm{U}_{1,h} \bm{U}_{2,h}^{\top},
\end{align}
where $\bm{U}_{1(2),h} \in \mathbb{R}^{D \times D/(2H)}$ ($h\in [1, H]$) satisfies the orthogonality condition $\bm{U}_{k,h}^{\top} \bm{U}_{k',h'} = \delta_{hh'} \delta_{kk'}  \bm{I}_{D/(2H)}$.
\end{proposition}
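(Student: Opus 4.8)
The plan is to differentiate $E_{\text{multi}}$ along the flow, use the cross-head orthogonality to collapse the rate into a sum over individual heads, and then establish the per-head inequality. Write $x_i:=\bm X_{[i,:]}^\top$, $\bm A_h:=\bm W^{Q}_h\bm W^{K\top}_h=\bm U_{1,h}\bm U_{2,h}^\top$, $Z^h_{ij}:=\beta\,x_i^\top\bm A_h x_j$, $E^h_{ij}:=\exp(Z^h_{ij})$, $N^h_i:=\sum_j E^h_{ij}$, $\bar x^h_i:=\tfrac{1}{N^h_i}\sum_j E^h_{ij}x_j$, and $\tilde x^h_i:=\sum_j E^h_{ji}x_j$ (a ``column'' aggregate). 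Since $\bm W^{V}_h=\tfrac12(\bm A_h+\bm A_h^\top)$ is symmetric, the dynamics read $\dot x_i=\sum_h\bm W^{V}_h\bar x^h_i$. Differentiating $E_{\text{multi}}=-\sum_h\sum_{i,j}E^h_{ij}$ using $\partial Z^h_{kl}/\partial x_m=\beta(\delta_{mk}\bm A_h x_l+\delta_{ml}\bm A_h^\top x_k)$ and regrouping terms, I would first obtain
\[
\frac{dE_{\text{multi}}}{dt}=-\beta\sum_h\sum_k\dot x_k^\top\bigl(N^h_k\bm A_h\bar x^h_k+\bm A_h^\top\tilde x^h_k\bigr).
\]

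\textbf{Decoupling the heads.} The orthogonality $\bm U_{k,h}^\top\bm U_{k',h'}=\delta_{hh'}\delta_{kk'}\bm I_{D/(2H)}$ forces every two-fold product of the $\bm A_h$'s and their transposes to vanish unless the two head indices coincide, in which case it equals $\bm U_{1,h}\bm U_{1,h}^\top$ or $\bm U_{2,h}\bm U_{2,h}^\top$; in particular $\bm W^{V}_{h'}\bm A_h=\tfrac12\delta_{hh'}\bm U_{2,h}\bm U_{2,h}^\top$, $\bm W^{V}_{h'}\bm A_h^\top=\tfrac12\delta_{hh'}\bm U_{1,h}\bm U_{1,h}^\top$, and $\bm A_h^2=0$. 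Substituting $\dot x_k=\sum_{h'}\bm W^{V}_{h'}\bar x^{h'}_k$, only the $h'=h$ piece survives inside head $h$'s summand, so $dE_{\text{multi}}/dt=\sum_h dE_h/dt$ with each $dE_h/dt$ equal to the rate that would be produced by running head $h$ in isolation. It therefore suffices to prove $dE_h/dt\le 0$ for a single head.

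\textbf{The single-head rate.} Fix a head and drop its index. Inserting $\dot x_k=\tfrac12(\bm A+\bm A^\top)\bar x_k$ and using $\bm A^2=(\bm A^\top)^2=0$ (hence $(\bm A+\bm A^\top)\bm A=\bm A^\top\bm A$ and $(\bm A+\bm A^\top)\bm A^\top=\bm A\bm A^\top$) collapses the rate to
\[
\frac{dE}{dt}=-\frac{\beta}{2}\sum_k\Bigl(N_k\|\bm A\bar x_k\|^2+(\bm A^\top\tilde x_k)^\top(\bm A^\top\bar x_k)\Bigr).
\]
The first term is manifestly $\le 0$. Using $\bm A=\bm U_1\bm U_2^\top$ and $\bar x_k=\tfrac1{N_k}\sum_l E_{kl}x_l$, the two sums can be rewritten as
\begin{align*}
\sum_k N_k\|\bm A\bar x_k\|^2&=\operatorname{tr}\bigl(\bm H\,\bm E^\top\bm D^{-1}\bm E\bigr),\\
\sum_k(\bm A^\top\tilde x_k)^\top(\bm A^\top\bar x_k)&=\operatorname{tr}\bigl(\bm G\,\bm E\bm D^{-1}\bm E\bigr),
\end{align*}
where $\bm E=(E_{ij})$, $\bm D=\operatorname{diag}(N_1,\dots,N_S)$, and $\bm G$, $\bm H$ are the Gram matrices of $\{\bm U_1^\top x_i\}_i$ and $\{\bm U_2^\top x_i\}_i$, respectively. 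Everything then reduces to showing $\operatorname{tr}(\bm H\,\bm E^\top\bm D^{-1}\bm E)+\operatorname{tr}(\bm G\,\bm E\bm D^{-1}\bm E)\ge 0$.

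\textbf{Where the difficulty sits.} The substantive step is this last positivity; the chain-rule bookkeeping and the head decoupling are routine. The $\bm H$-term alone is nonnegative, since $\bm E^\top\bm D^{-1}\bm E\succeq 0$ and $\bm H\succeq 0$. However, the matrix $\bm E\bm D^{-1}\bm E$ coming from the transpose $\bm A^\top$ is \emph{not} symmetric, so its pairing with the PSD matrix $\bm G$ is not signed by a generic trace-of-PSD-product bound, and one can in fact make an individual term $(\bm A^\top\tilde x_k)^\top(\bm A^\top\bar x_k)$ negative. The way out must use that $\bm E$, $\bm G$, and $\bm H$ are coupled: $E_{ij}=\exp(\beta\langle\bm U_1^\top x_i,\bm U_2^\top x_j\rangle)$ ties the attention weights to the very token coordinates whose Gram matrices are $\bm G$ and $\bm H$. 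I would look for a completion of squares, or a Jensen-type inequality, that leverages this coupling (together with $\bm A^2=0$) to dominate the off-diagonal term by the first term. The fully symmetric case---where the flow is a genuine gradient flow and $dE/dt=-2\beta\sum_k N_k\|\dot x_k\|^2$---and \Cref{prop:single} are useful consistency checks for the signs.
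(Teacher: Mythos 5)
Your chain rule, the head-decoupling via the orthogonality of the $\bm U_{k,h}$, and the reduction to the single-head rate
\begin{align*}
\frac{dE_h}{dt}=-\frac{\beta}{2}\sum_k\Bigl(N_k\|\bm A\bar x_k\|^2+(\bm A^\top\bar x_k)^\top(\bm A^\top\tilde x_k)\Bigr)
\end{align*}
are all correct, but the proposal stops exactly at the step that carries all the content: the positivity of the second sum is never established, and no completion of squares or Jensen argument can establish it. The kernel $E_{ij}=\exp(\beta\langle\bm U_1^\top x_i,\bm U_2^\top x_j\rangle)$ is invariant under the rescaling $\bm U_1^\top x_i\mapsto c\,\bm U_1^\top x_i$, $\bm U_2^\top x_j\mapsto c^{-1}\bm U_2^\top x_j$ (realizable because $x_i=\bm U_1u_i+\bm U_2v_i$ lets you pick the two coordinate blocks independently), and under it your manifestly nonnegative first sum scales as $c^{-2}$ while the problematic second sum scales as $c^{+2}$. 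So the claimed inequality can hold for all states only if the second sum is nonnegative by itself, and it is not: with $D=2$, $H=1$, $\bm U_1=e_1$, $\bm U_2=e_2$, $\beta=1$, and three tokens whose $(u,v)$-coordinates are $(c,-2/c)$, $(-c,-1/c)$, $(0,1/c)$, a direct evaluation gives $\sum_k(\bm A^\top\bar x_k)^\top(\bm A^\top\tilde x_k)\approx-0.52\,c^{2}$, and for $c=10$ one finds $dE_{\text{multi}}/dt\approx+26>0$.

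What your more careful bookkeeping has actually exposed is that the paper's own proof breaks at the same spot. The paper asserts $\partial E/\partial\bm X_{[i,:]}=-\sum_jE_{ij}(\bm A+\bm A^\top)\bm X_{[j,:]}$, which silently replaces your column aggregate $\tilde x_i=\sum_jE_{ji}x_j$ by the row aggregate $N_i\bar x_i=\sum_jE_{ij}x_j$ (equivalently, it differentiates $\exp(\beta x_j^\top\bm Ax_i)$ as if it were $\exp(\beta x_i^\top\bm Ax_j)$). That identity requires $E_{ij}=E_{ji}$, i.e.\ a symmetric $\bm A=\bm W^Q\bm W^{K\top}$ --- precisely the constraint the proposition claims to relax. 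Once it is granted, the gradient equals $-2Z_{h,i}\Delta_{h,i}$ and $dE/dt=-2\sum_{i,h}Z_{h,i}\|\Delta_{h,i}\|^2\le0$ is immediate, with the orthogonality conditions needed only to kill the cross-head products $\Delta_{h}^\top\Delta_{h'}$ (a step you reproduce correctly). So your proposal is not a complete proof, but the gap is not one you could have closed: the residual term genuinely changes sign, and the monotonicity as stated appears to fail for asymmetric $\bm W^Q\bm W^{K\top}$.
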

\Cref{prop:single,prop:multi} imply that certain structures of weight matrices are desirable to ensure the existence of an energy function. Specifically, $W^Q_{h} W^{K\top}_{h}$ can be asymmetric, whereas $W^{V}_{h}$ should remain symmetric. In the multi-head scenario, a low-rank structure in the QK product is required. This aligns with practical Transformers, as they typically exhibit a low-rank structure due to the small inner dimension (the width of $W^Q_{h}, W^{K}_{h}$).
We refer to architectures that incorporate these properties as \textbf{generalized symmetric SA}, and we will explore their effectiveness in our experiments (\Cref{exp:reg}). The proofs are provided in~\Cref{proof:single,proof:multi}.

\section{Jacobian-based analysis}
\label{sec:jacobian}
In general, energy functions are used to guarantee the convergence of dynamics to fixed points (a.k.a. Lyapunov's direct method) \citep{khalil2002nonlinear}. While this is a concrete approach to achieving stable dynamics, the construction of energy functions is usually unsystematic, and thus it is not obvious whether we can handle more realistic SA dynamics (e.g., discrete updates with normalization). Furthermore, recent experimental results have reported non-stationary dynamics (e.g., oscillations) \citep{karagodin2024clustering,miyato2024artificial}, suggesting the need for more flexible approaches applicable to richer dynamics.
Thus, we turn to analyzing the Jacobian matrix of state updates. The Jacobian controls the Lipschitzness of the function and also naturally appears in linear stability analysis (a.k.a. Lyapunov's indirect method), where state updates are locally described by $\bm{f}(\bm{x}+\Delta \bm{x}) \approx \bm{f}(\bm{x}) + \bm{J}(\bm{x}_{t}) \Delta \bm{x}$  with the Jacobian $\bm{J} \coloneqq \partial \bm{f}/\partial \bm{x}$. 


\subsection{Normalization and spectral norm}
Normalization operators, which do not appear in continuous-time dynamics, are essential in the discrete setting because discretizing state updates causes the state vector to deviate from the hypersphere. We find that the normalization operators suppress the Jacobian's eigenvalues.

\begin{proposition}
\label{prop:normalize} 
Suppose that, in the update of ItrSA~\eqref{eq:itrsa}, the input to the normalization layer satisfies $\|\bm{X}_{[i,:]} + \eta \Delta \bm{X}_{[i,:]}\| \ge R$ for all $i \in [1, S]$. Then, the spectral norm of the Jacobian satisfies
\begin{align}
\left\| \frac{\partial \operatorname{RMSNorm}(\bm{X} + \eta \Delta \bm{X})}{\partial \bm{X}} \right\|_2 
&\le \frac{\max_j(|\gamma_j|)}{R} \left(1 + \eta \left\|\bm{J}_{\text{MSA}}(\bm{X})\right\|_2 \right),\label{eq:lip_normalize}
\end{align}
where $\bm{J}_{\text{MSA}}(\bm{X}) \coloneqq \partial \operatorname{MSA}(\bm{X}) / \partial \bm{X}$ denotes the Jacobian of $\operatorname{MSA}$.
\end{proposition}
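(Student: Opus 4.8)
The plan is to decompose the full Jacobian by the chain rule, bound each factor separately, and then multiply the bounds. Write $\bm{Y} \coloneqq \bm{X} + \eta\,\Delta\bm{X}$ with $\Delta\bm{X} = \bm{C} + \operatorname{MSA}(\bm{X})$, so that $\operatorname{RMSNorm}(\bm{X}+\eta\Delta\bm{X}) = \operatorname{RMSNorm}(\bm{Y})$. The chain rule gives
\begin{align}
\frac{\partial \operatorname{RMSNorm}(\bm{Y})}{\partial \bm{X}}
= \frac{\partial \operatorname{RMSNorm}(\bm{Y})}{\partial \bm{Y}}\,
  \frac{\partial \bm{Y}}{\partial \bm{X}},
\qquad
\frac{\partial \bm{Y}}{\partial \bm{X}} = \bm{I} + \eta\,\bm{J}_{\text{MSA}}(\bm{X}),
\end{align}
since $\bm{C}$ is constant in $\bm{X}$. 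By submultiplicativity of the spectral norm and the triangle inequality, $\|\partial\bm{Y}/\partial\bm{X}\|_2 \le 1 + \eta\|\bm{J}_{\text{MSA}}(\bm{X})\|_2$, which already produces the second factor on the right-hand side of \eqref{eq:lip_normalize}. So the crux is to show $\|\partial \operatorname{RMSNorm}(\bm{Y})/\partial \bm{Y}\|_2 \le \max_j(|\gamma_j|)/R$.

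For that factor I would first note that RMSNorm acts token-wise (row-wise), so its Jacobian is block-diagonal over the $S$ tokens, and the spectral norm of a block-diagonal matrix is the maximum of the spectral norms of the blocks. It therefore suffices to bound, for a single row vector $\bm{y} = \bm{Y}_{[i,:]} \in \mathbb{R}^D$ with $\|\bm{y}\| \ge R$, the spectral norm of $\partial\big(\operatorname{diag}(\bm{\gamma})\,\bm{y}/\|\bm{y}\|\big)/\partial\bm{y}$. A direct computation of the derivative of $\bm{y}\mapsto \bm{y}/\|\bm{y}\|$ gives $\frac{1}{\|\bm{y}\|}\big(\bm{I}_D - \hat{\bm{y}}\hat{\bm{y}}^\top\big)$ where $\hat{\bm{y}} = \bm{y}/\|\bm{y}\|$; the matrix $\bm{I}_D - \hat{\bm{y}}\hat{\bm{y}}^\top$ is an orthogonal projector, hence has spectral norm $1$ (for $D\ge 2$). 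Left-multiplying by the constant $\operatorname{diag}(\bm{\gamma})$ contributes a factor $\|\operatorname{diag}(\bm{\gamma})\|_2 = \max_j(|\gamma_j|)$, and the prefactor $1/\|\bm{y}\| \le 1/R$ uses the hypothesis. Combining, each block has spectral norm at most $\max_j(|\gamma_j|)/R$, so the whole RMSNorm Jacobian does too. Multiplying the two factors yields \eqref{eq:lip_normalize}.

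The only real subtlety — and the step I would be most careful about — is the spectral-norm bound on the projector times $\operatorname{diag}(\bm{\gamma})$: the product $\operatorname{diag}(\bm{\gamma})(\bm{I}_D - \hat{\bm{y}}\hat{\bm{y}}^\top)$ is not symmetric, so one cannot read off its norm from eigenvalues, but submultiplicativity $\|\operatorname{diag}(\bm{\gamma})(\bm{I}-\hat{\bm{y}}\hat{\bm{y}}^\top)\|_2 \le \|\operatorname{diag}(\bm{\gamma})\|_2\,\|\bm{I}-\hat{\bm{y}}\hat{\bm{y}}^\top\|_2$ is exactly what is needed and is clean. A minor point worth stating explicitly is the ordering: RMSNorm applies $\Pi$ first and then scales by $\operatorname{diag}(\bm{\gamma})$, so $\operatorname{diag}(\bm{\gamma})$ appears as a left factor in the Jacobian and the bound is tight in the sense that it degrades linearly in $\max_j|\gamma_j|$ — this is the quantitative handle that later sections use to argue $\bm{\gamma}$ controls stability. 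I would also remark that the hypothesis $\|\bm{Y}_{[i,:]}\| \ge R$ is the mechanism by which normalization, absent from the continuous-time dynamics, actively contracts the Jacobian: for $R > \max_j|\gamma_j|$ the normalization strictly shrinks the state-update Jacobian relative to the un-normalized map $\bm{X} \mapsto \bm{X} + \eta\Delta\bm{X}$.
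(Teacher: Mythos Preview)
Your proposal is correct and follows essentially the same approach as the paper: a chain-rule split into the RMSNorm Jacobian and $\bm{I}+\eta\,\bm{J}_{\text{MSA}}$, the block-diagonal structure of the RMSNorm Jacobian over tokens, and the bound $\|\operatorname{diag}(\bm{\gamma})(\bm{I}-\hat{\bm{y}}\hat{\bm{y}}^\top)\|_2/\|\bm{y}\| \le \max_j|\gamma_j|/R$ via submultiplicativity and the projector having unit norm. The paper's proof is organized in the same way, invoking a lemma that records the block-diagonal form explicitly, so there is no substantive difference.
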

We show the proof in~\Cref{proof:normalize}. 
This proposition highlights the key stabilizing effect of normalization: the spectral norm is inversely proportional to $R$. This effect appears to be particularly significant for preventing signal explosion in looped architectures, where the same operation is repeatedly applied.

\Cref{fig:eta_vs_exponent} shows that, in a practical model, normalization reduces the maximum Lyapunov exponent. This exponent corresponds to a time-averaged 
maximum singular value of $\bm{J}$, which is further explained and analyzed in~\Cref{sec:maximum_exponent}. This result supports the stabilizing effect of normalization implied by~\Cref{prop:normalize}. 

In addition, under the assumption that $\|\Delta\bm{X}_{[i,:]}\|\ge \varepsilon$ for some constant $\varepsilon > 0$ for all $i$, we can further show that as $\eta \to \infty$,
\begin{align}
\left\| \frac{\partial \operatorname{RMSNorm}(\bm{X}+\eta \Delta\bm{X})}{\partial \bm{X}} \right\|_2
= O(1). \label{eq:eta_limit}
\end{align}
That is, the Jacobian norm remains bounded even for a large $\eta$ (see~\Cref{proof:normalize}).

\textbf{Jacobian eigenvalues of SA.}
\citet{castin2023smooth} provide an upper bound on the maximum eigenvalue (in the form of a Lipschitz constant) of the Jacobian of SA defined as $\bm{J}_{\text{MSA}}(\bm{X}) \coloneqq \partial \operatorname{MSA}(\bm{X}) / \partial \bm{X}$. Specifically, their results in Theorem 3.3 and Lemma 3.8 state that the Jacobian $\bm{J}_{\text{MSA}}(\bm{X})$ satisfies 
for input tokens $\bm{X}\in \mathbb{R}^{S\times D}$ such that $\|\bm{X}_{[i,:]}\| \le r$ for all $i\in [1, S]$. 

\begin{align}
\label{eq:sa_lip}
    \left\| \bm{J}_{\text{MSA}}(\bm{X}) \right\|_{2} \le \sum_{h=1}^{H}\sqrt{3} \|\bm{W}_{h}^{O}\|_{2} \|\bm{W}^{V}_{h}\|_{2}\sqrt{\|\beta\bm{W}^{Q}_{h}\bm{W}^{K\top}_{h}\|_{2}r^4 (S+1)+S}, 
\end{align}
\begin{wrapfigure}[15]{r}{0.43\textwidth}
  \begin{center}
  \includegraphics[width=0.4\textwidth]{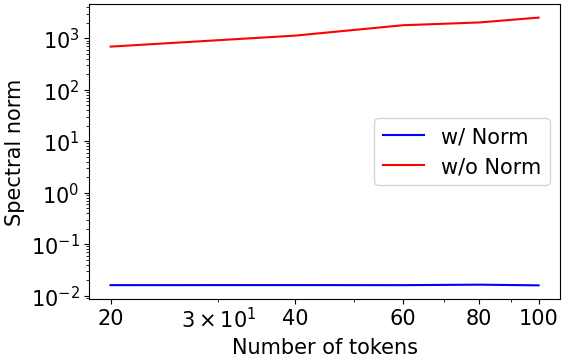}
  \caption{{\bf Normalization improves the spectral norm of SA's Jacobian.}}\label{fig:tokens_vs_jac}
  \end{center}
\end{wrapfigure}
This inequality indicates that the right-hand side of Eq.~\eqref{eq:lip_normalize} is bounded independently of the input~$\bm{X}$. 
It further implies that when the norms of the weight matrices $ \|\bm{W}_{h}^{O}\|_{2}, \|\bm{W}^{V}_{h}\|_{2}$, or $\|\bm{W}^{Q}_{h}\bm{W}^{K\top}_{h}\|_{2}$, or the number of tokens $S$ becomes large, the spectral norm can also become large. 
This is precisely the issue that normalization techniques can address.~\Cref{fig:tokens_vs_jac} demonstrates that the spectral norm of the untrained SA's Jacobian can be effectively reduced through normalization (see~\Cref{sec:ref} for experimental settings).
Interestingly, we empirically observed that the spectral norm is not only reduced by normalization but remains $O(1)$ with respect to the number of tokens. This suggests that the current theoretical bound (Eqs. \eqref{eq:lip_normalize} and \eqref{eq:sa_lip}) is conservative, and a tighter bound remains future theoretical work.

\subsection{Normalization of oscillatory components}
\label{sec:osc}
\begin{figure}[b]
\centering
\begin{minipage}{0.32\columnwidth}
    \centering
    \includegraphics[height=3.5cm]{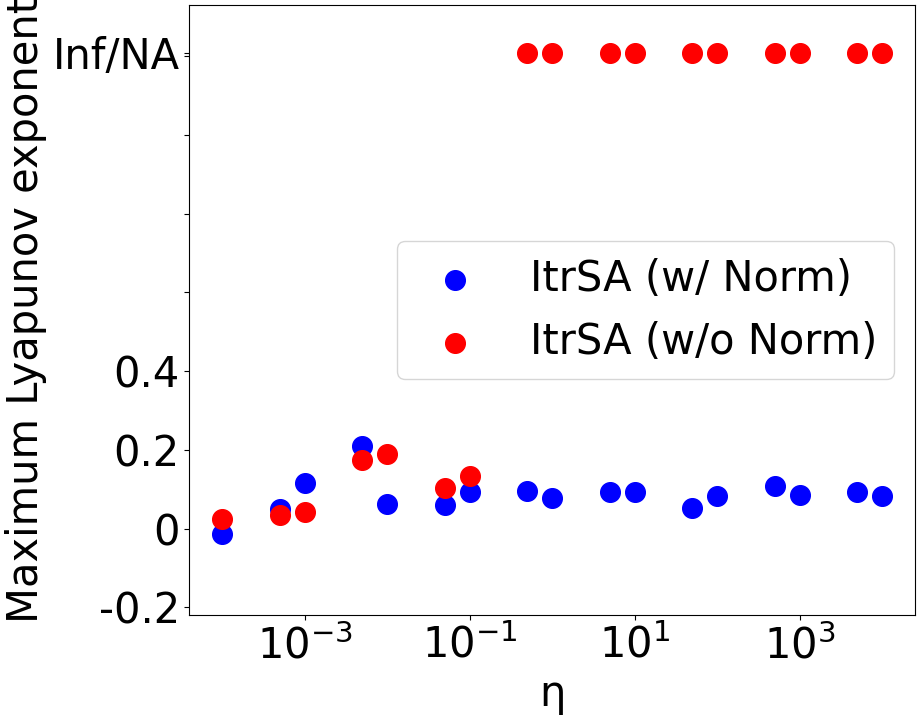}
   \subcaption{Maximum Lyapunov exponent with and without normalization}
   \label{fig:eta_vs_exponent}
\end{minipage}
\hfill
\begin{minipage}{0.32\columnwidth}
    \centering
    \includegraphics[height=3.5cm]{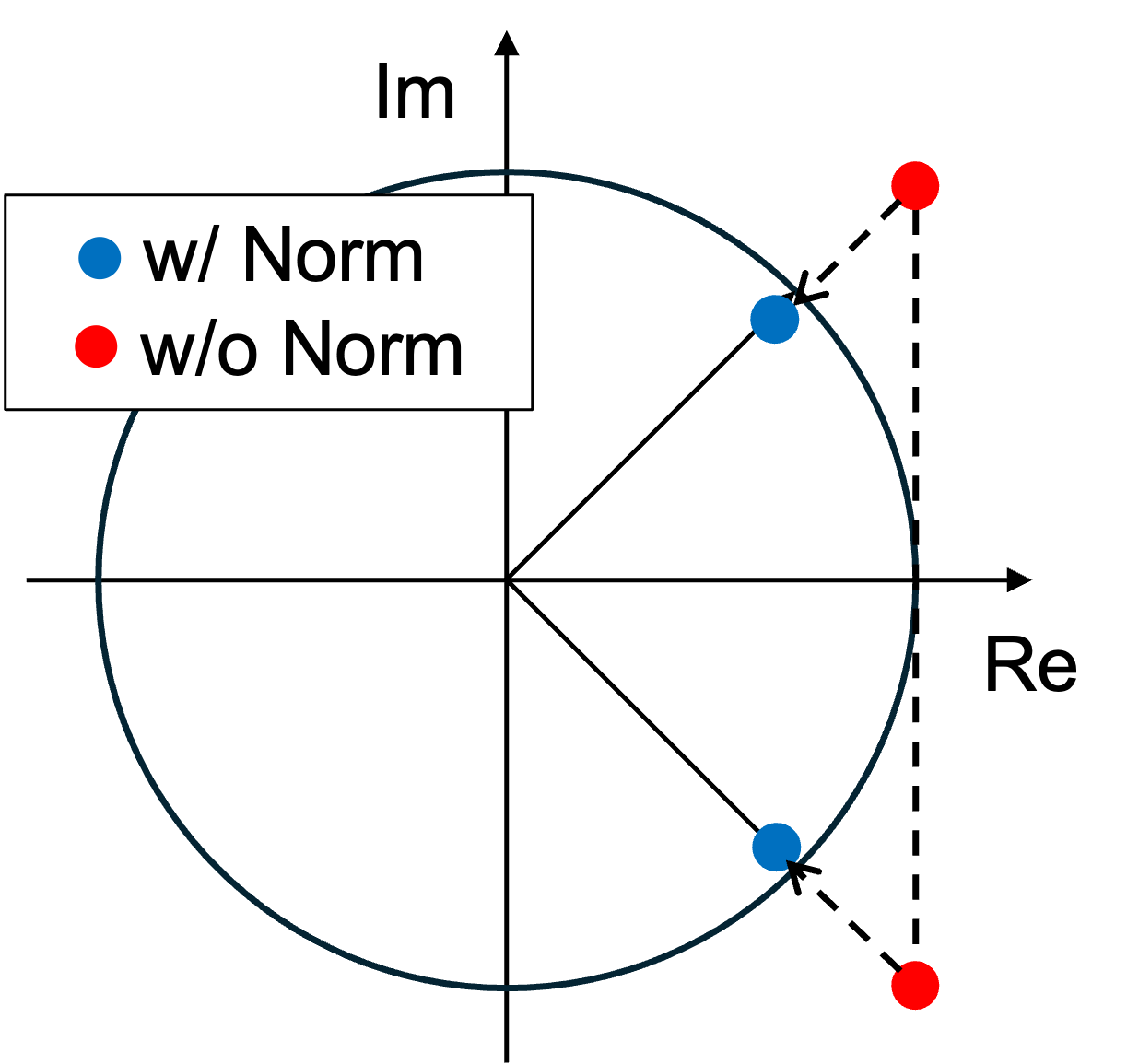}
    \subcaption{Effect of normalization on eigenvalues in oscillatory case}
    \label{fig:illust}
\end{minipage}
\hfill
\begin{minipage}{0.32\columnwidth}
    \centering
    \includegraphics[height=3.5cm]{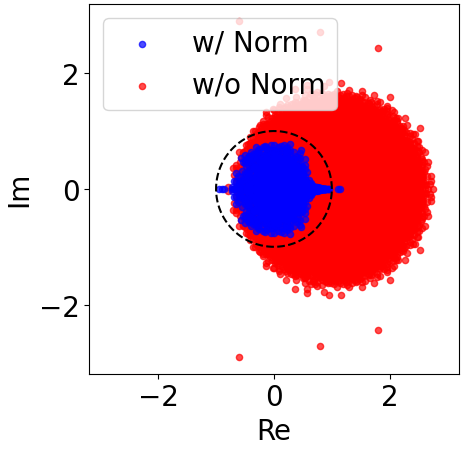}
    \subcaption{Eigenvalue distribution in the complex plane}
    \label{fig:sudoku_complex_plane}
\end{minipage}
\caption{{\bf Normalization layers play a crucial role in controlling the  Jacobian's eigenvalues.} (a) and (c) show results on the Sudoku dataset.}
    \label{fig:normalization}
\end{figure}
To clarify differences between continuous and discrete updates and highlight the role of normalization, we analyze the following simplified dynamics  and their associated Jacobians:
\begin{align}
 \dot{\bm{x}}  &= \bm{\Omega} \bm{x},                                                              & \bm{J}(\bm{x})     &= \bm{\Omega}                                       \hspace{-0.5em}&& \text{(continuous)} \label{omega_cont} \\
 \bm{x}^{(t+1)}      &= (\bm{I}_{D} + \eta \bm{\Omega}) \bm{x}^{(t)},                                                      & \bm{J}(\bm{x}^{(t)})   &= \bm{I}_{D} + \eta \bm{\Omega}                                   \hspace{-0.5em}&& \text{(discrete w/o Norm)} \label{omega_discrete} \\
 \bm{x}^{(t+1)}     &= \Pi \big((\bm{I}_{D} + \eta \bm{\Omega}) \bm{x}^{(t)}\big),                                        & \bm{J}(\bm{x}^{(t)})   &=\left(\bm{I}_{D} - \frac{\bm{y}\bm{y}^\top}{\|\bm{y}\|^2}\right) \frac{\bm{I}_{D} + \eta \bm{\Omega}}{\|\bm{y}\|} \hspace{-0.5em}&& \text{(discrete w/ Norm)} \label{omega_discrete_normalization}
\end{align}
where $\bm{\Omega}$ is an anti-symmetric matrix and $\bm{y}=(\bm{I}_{D} + \eta \bm{\Omega}) \bm{x}^{(t)}$. 
Since an anti-symmetric matrix has purely imaginary eigenvalues, these dynamics represent simple oscillatory systems. The discrete dynamics with normalization can also be interpreted as isolating the Omega layer used in AKOrN.

 It is known that the pure imaginary eigenvalues in the continuous-time limit are essential for capturing long-term signal dependencies, but can be significantly damaged by discretization \citep{chang2019antisymmetricrnn}. 
Generally, in continuous-time systems $\dot{\bm{x}} = \bm{f}(\bm{x})$, the equilibrium point is Lyapunov stable if all eigenvalues $\lambda_j$ of the Jacobian $\bm{J}(\bm{x})$ satisfy $\operatorname{Re}(\lambda_j) \le 0$. In Eq.~\eqref{omega_cont},  the Jacobian  $\bm{J}(\bm{x}) =  \bm{\Omega}$ has purely imaginary eigenvalues and they are on the boundary of stability, allowing persistent oscillations. 
In contrast,  the equilibrium points of discrete-time systems $\bm{x}^{(t+1)} = \bm{f}(\bm{x}^{(t)})$ are Lyapunov stable if all eigenvalues $\lambda_j$ of the Jacobian satisfy $|\lambda_j| \le 1$. For Eq.~\eqref{omega_discrete}, all eigenvalues of the Jacobian $\bm{J}(\bm{x}) = \bm{I}_{D} + \eta \bm{\Omega}$ take the form $1 \pm i \eta \omega_j $ for $\omega_j \geq 0$, implying that $|\lambda_j| \ge 1$. Therefore, the system becomes unstable.
To avoid the fundamental instability arising from discretization, previous work on architecture design inspired by dynamical systems proposed to add a diffusion term to the anti-symmetric weight matrix, i.e., $\bm{\Omega}-\gamma \bm{I}$ ($\gamma >0$) \citep{haber2017stable,chang2019antisymmetricrnn}.

We find that a normalization layer~\eqref{omega_discrete_normalization} serves as an alternative way to mitigate this instability by effectively rescaling the system through division by $\|\bm{y}\|$. For simplicity, suppose that all eigenvalues of $\Omega$ degenerate to the same $\omega_j=\omega$. 
After a straightforward calculation, we obtain $|\lambda_j|\le 1$ (see~\Cref{sec:eig_osc} for the derivation).
The effect of this normalization is illustrated in~\Cref{fig:illust}. 
Although this scenario represents an idealized setting, the normalization of imaginary components is empirically observed even in the case of SA, as shown in~\Cref{fig:normalization}. 

\subsection{Lyapunov exponent indicates criticality}
\label{sec:maximum_exponent}

\begin{figure}[tb]
        \centering
\begin{minipage}{0.45\linewidth}
    \centering
    \includegraphics[width=\linewidth]{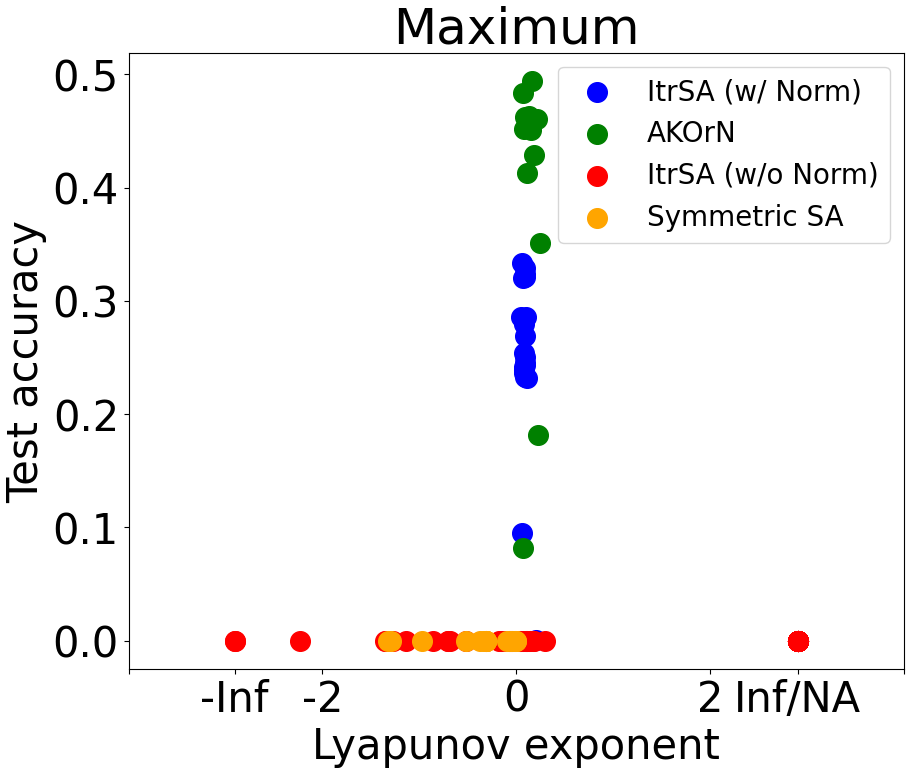}
\end{minipage}
\begin{minipage}{0.45\linewidth}
    \centering
    \includegraphics[width=\linewidth]{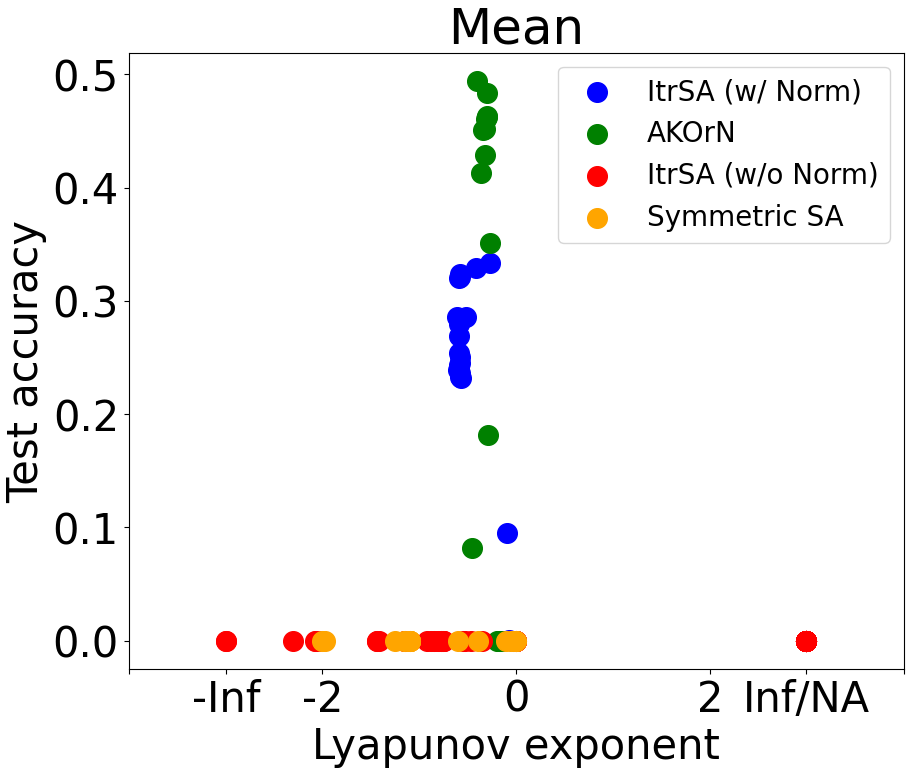}
\end{minipage}
\caption{{\bf Normalization drives the Lyapunov exponents toward zero and enables high accuracy.}}
\label{fig:exponent_vs_acc}
\end{figure}

The Lyapunov exponent measures the exponential rate at which trajectories locally converge or diverge in dynamical systems \citep{khalil2002nonlinear} (see details in~\Cref{appendix:lyapunov}). Intuitively, it corresponds to the time-averaged value of $\ln \sigma_i(\bm{J})$, where $\sigma_i(\bm{J})$ denotes the Jacobian's singular values. A positive exponent indicates instability and sensitivity to initial conditions, whereas a negative exponent implies convergence. An exponent close to zero characterizes a critical regime, often referred to as the edge of chaos, where signals neither explode nor vanish and can propagate for a long period.
This critical regime has been reported to correlate with the high performance of neural networks across various contexts \citep{boedecker2012information,poole2016exponential,pennington2017resurrecting}.

\Cref{fig:exponent_vs_acc} shows that SA models achieving high test accuracy empirically have Lyapunov exponents close to zero, thus operating near criticality. The maximum and mean exponents display nearly identical behaviors.  We vary hyper-parameters, including $\eta$ and the norm of weight matrices, and plot multiple points (see details in~\Cref{appendix:lyapunov}). In models with normalization layers, the exponent tends to concentrate around zero, indicating the criticality. High accuracy is achieved only in these models. This supports the stabilizing effect of normalization implied by~\Cref{prop:normalize} and Section \ref{sec:osc}.
In contrast, energy-based symmetric SA models show negative exponents, consistent with their Lyapunov stability, leading to lower accuracy.
Interestingly, the maximum Lyapunov exponent of successful models is slightly positive ($\sim0.1$), indicating that the dynamical state resides near criticality from the chaotic side. Notably, we observed that the dynamics with this slightly positive maximum Lyapunov exponent indicate the sensitivity to initial conditions, implying chaotic behavior (see~\Cref{fig:sensitivity} in the appendix). This observation aligns with previous reports of positive maximum Lyapunov exponents in some Transformers
~\citep{inoue2022transient,liu2024exploiting,tong2025neural}. We observed similar Lyapunov exponents across the CIFAR-10 dataset (\Cref{fig:cifar10}) and the language modeling task (\Cref{tab:lm-lyapunov}). We further observed that as the number of attention heads increases, the Lyapunov exponents tend to increase (\Cref{fig:exponent_reg,fig:exponent_head}). This implies that multi-head attention would favor a more dynamic state. 

\section{Quantitative insight into inference dynamics}
Here, we experimentally investigate the test-time scaling of inference in looped architectures. 
We mainly focus on evaluation on the Sudoku task using the SATNet~\citep{wang2019satnet} dataset for in-distribution (ID) data and the RRN dataset~\citep{palm2018recurrent} for out-of-distribution (OOD) data. At test time, we increased the number of loops beyond the training setting of $T = 16$. Details are provided in \Cref{appendix:experiment}.

\subsection{Test-time scaling and normalization}
\label{sec:test-time}
\citet{miyato2024artificial} showed that AKOrN exhibits test-time scaling, whereas ItrSA does not, suggesting the superiority of AKOrN over ItrSA. However, with our formulation of ItrSA, \Cref{fig:N} demonstrates that ItrSA also exhibits test-time scaling. 
Moreover, it shows that AKOrN fails to maintain test-time scaling when $N$ becomes large, which is consistent with the observations by~\citet{miyato2024artificial}. This issue can be mitigated by applying RMS normalization, where we use oscillator-wise RMS normalization. The learned scaling parameter $\bm{\gamma}$ prevents the Jacobian from becoming excessively large. Empirically, we confirmed that the trained $\bm{\gamma}$ remains small (see~\Cref{tab:gamma_rmsnorm}), eliminating the need for explicit clipping such as $|\gamma_i| \leq 1$. 

\begin{figure}[tb]
        \centering
        \includegraphics[width=0.9\columnwidth]{
                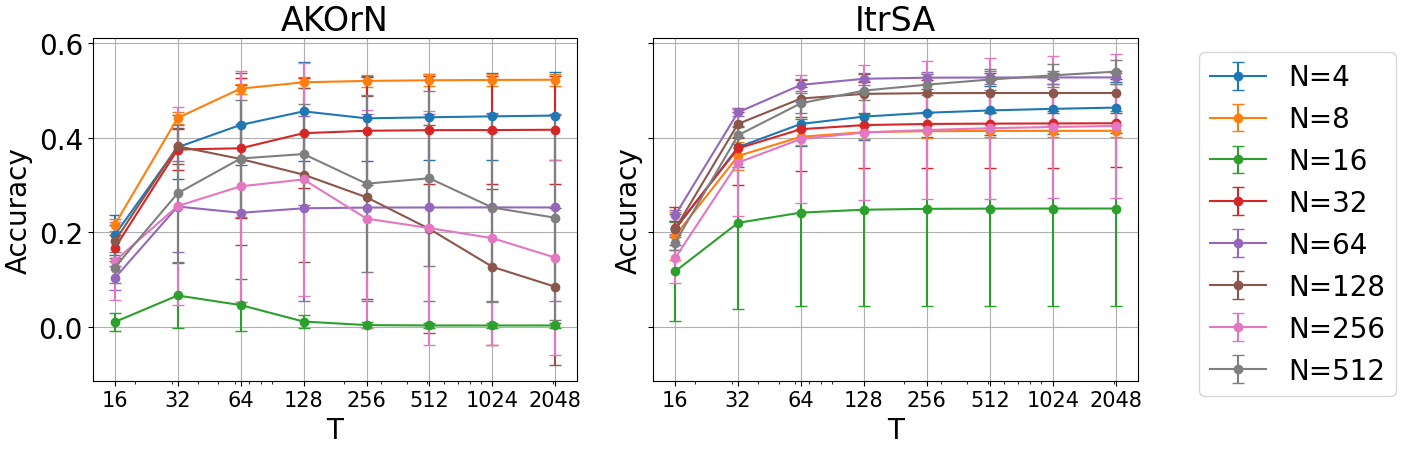
            }
        \caption{ 
        {\bf ItrSA consistently improves accuracy as the number of loops $T$ increases, regardless of the value of oscillator dimension $N$.} 
        Note that $N=512$ corresponds to the setting where tokens are not split into oscillators. Error bars indicate the standard deviation.}
        \label{fig:N}
\end{figure}

\subsection{Application to regularization} \label{exp:reg}
To apply our energy-based and Jacobian-based analysis, we add the following regularization term $R$, scaled by a tunable parameter $\lambda$.

{\bf Method (i): Energy-based regularization.} 
\Cref{prop:multi} suggests that SA architectures satisfying specific conditions inherently minimize an energy function. 
To investigate the practical utility of this property, we consider applying the constraints identified in the proposition as a regularization term to the existing multi-head MSA.
We add the following energy-based regularization term to the loss function during the training of ItrSA,  which can be interpreted as an approximation of energy-based SA models.  
By defining the concatenation as $\bm{W}^{V} \coloneqq [\bm{W}_{1}^{V}, \cdots, \bm{W}_{H}^{V}] \in \mathbb{R}^{D \times D}$, we introduce
\begin{align}
R_{\text{E-multi}} \coloneqq \left\| \bm{W}^{V}\bm{W}^{O} - (\bm{W}^{V}\bm{W}^{O})^{\top} \right\|_{F}^{2}, \label{eq:reg_emulti}
\end{align}
where both $\bm{W}^{V}$ and $\bm{W}^{O}$ are implemented as \textbf{orthogonal matrices}. 
Note that each $\bm{W}_{h}^{V}$ in the proposition is interpreted as the product $\bm{W}_{h}^{V} \bm{W}_{h}^{O}$ in ItrSA.
If $R_{\text{E-multi}} = 0$, $\bm{W}_{h}^{V} \bm{W}_{h}^{O}$ becomes symmetric as implied by~\Cref{prop:multi}. 
For a single-head case, we can also propose 
\Cref{prop:single} as a regularization term $R_{\text{E-single}}$ (see~\Cref{sec:e-single}).

{\bf Method (ii): Jacobian spectral regularization.}
On the other hand, controlling the Jacobian spectra is an effective way to stabilize neural networks. Following the regularization proposed by~\citet{lewandowski2024learning}, we introduce the following regularization term:
\begin{align}
    R_{\text{Spec}} = \sum \nolimits_{\bm{W} \in \operatorname{SA}} \left( \sigma ^2(\bm{W}) - 1 \right)^2 + \sum \nolimits_{\bm{b} \in \operatorname{SA}} \|\bm{b}\|_2^4,
\end{align}
where the summations are taken over all weight matrices $\bm{W}$ and bias terms $\bm{b}$ in the SA modules, and $\sigma (\bm{W})$ denotes the largest singular value of $\bm{W}$. This regularization encourages the singular values to be close to $1$, which has been shown to be beneficial for recursive architectures~\citep{chang2019antisymmetricrnn}. We apply $R_{\text{Spec}}$ to both ItrSA and AKOrN.


\paragraph{Limitation of energy-based regularization.}
\Cref{fig:reg} shows the effects of the regularization methods. The accuracy of E-multi is lower than that without regularization. The Lyapunov exponents shown in ~\Cref{fig:exponent_reg} indicates that multi-head energy regularization encourages more convergent dynamics and this does not necessarily yield better performance.
E-single fails to reduce the training loss and fails even on ID tasks possibly due to the single-head constraint. 
These results suggest that energy-based regularization may be unnecessary, casting doubt on the validity of the energy-based perspective for practical applications.

\paragraph{Spectral regularization is particularly effective in AKOrN.}
Spectral regularization substantially enhances the performance of both AKOrN and ItrSA, and the effect is especially pronounced in AKOrN. Equation~\eqref{eq:sa_lip} shows that the maximum eigenvalue of the Jacobian can grow significantly when the weight matrices have large norms. Spectral regularization addresses this issue by directly constraining the Jacobian spectrum. When $N=8$, AKOrN achieves the best performance and spectral regularization is also effective (\Cref{fig:sup_reg}).

\begin{figure}[tb]
        \centering
        \includegraphics[width=0.9\columnwidth]{
                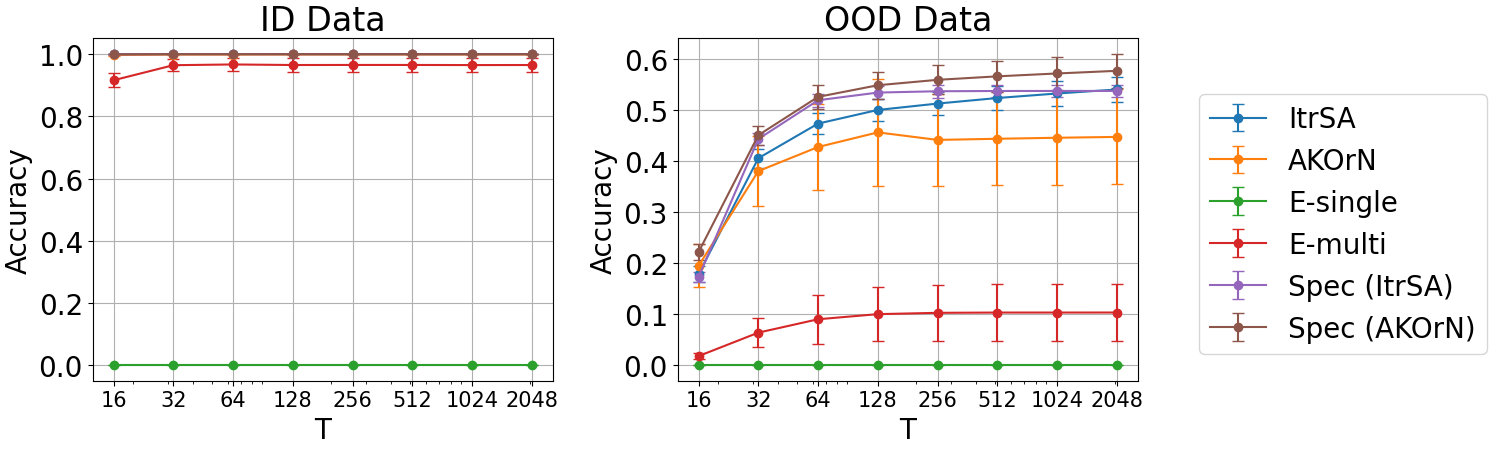
            }
        \caption{{\bf Energy-based regularization (``E-single'' and ``E-multi'') underperforms the original methods, while Jacobian spectral regularization (``Spec'') outperforms.} We set $H=8$ except for E-single ($H=1$). For AKOrN, we used $N=4$.}
        \label{fig:reg}
        
\end{figure}


\subsection{An interpretation of pseudo-energy via Jacobian}
\label{sec:jac_energy}
\begin{wrapfigure}[13]{r}{0.40\textwidth}
\vspace{-15pt}
 \begin{center}
  \includegraphics[width=0.38\textwidth]{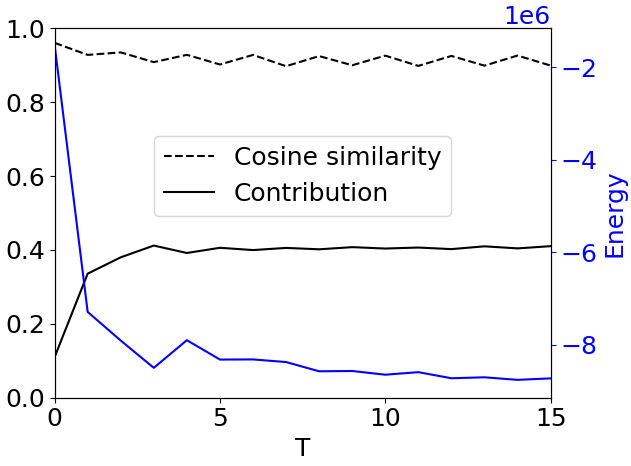}
  \end{center}
  \caption{Test-time inference of AKOrN on the Sudoku dataset.}
  \label{fig:contribution}
\end{wrapfigure}

While it is non-trivial for general SA dynamics to have an energy function, \citet{miyato2024artificial} empirically found that AKOrN approximately decreases the quantity $E_{\text{pseudo}}(t) \coloneqq -\operatorname{Tr}(\bm{X}^{(t)\top} \bm{Y}^{(t)})$, which we refer to as pseudo-energy, during the updates of Eqs.~\eqref{eq:akorn_delta} and \eqref{eq:akorn_update}, where $\bm{Y}^{(t)} = \bm{C} + \operatorname{MSA}(\bm{X}^{(t)})$. They also reported that the prediction with a lower pseudo-energy performs better. 
Under certain symmetry assumptions without SA, it reduces to the energy function of a generalized Kuramoto model. However, its interpretation in AKOrN with SA remains unclear.

We found that the Jacobian provides an interpretation. Suppose that $\operatorname{vec}(\operatorname{MSA}(\bm{X}^{(t)}))\in \mathbb{R}^{DS}$ is well-approximated by $\bm{J}_{t} \bm{x}_{t}$, where $\bm{x}_{t} \in \mathbb{R}^{DS}$ denotes $\operatorname{vec}(\bm{X}^{(t)})$ and $\bm{J}_{t} = \partial \operatorname{MSA}(\bm{X}^{(t)})/\partial \bm{X}^{(t)} \in \mathbb{R}^{DS\times DS} $. \Cref{fig:contribution} shows that while $E_{\text{pseudo}}$ significantly decreases, the cosine similarity between $\operatorname{vec}(\operatorname{MSA}(\bm{X}^{(t)}))$ and $\bm{J}_{t} \bm{x}_{t}$ remains high throughout iterations.  Under this approximation, neglecting a constant term $\bm{C}$, we obtain the relation $E_{\text{pseudo}} = - \bm{x}_{t}^{\top} \bm{S}_t \bm{x}_{t}/2$, a quadratic form involving a symmetric matrix $\bm{S}_t = \bm{J}_{t} + \bm{J}_{t}^{\top}$. We further expand $\bm{x}_{t}$ in the orthonormal basis of $\bm{S}_t$ as $\bm{x}_{t} = \sum_{k=1}^{DS} a_{t}^{k} \bm{v}_{t}^{k}$,
 and the eigenvectors are sorted in descending order of their eigenvalues ($\lambda_t^{1} \ge \dots \ge \lambda_t^{DS}$). The {\it contribution index} in the figure quantifies the extent to which the state $\bm{x}_{t}$ is captured by the top $2\%$ eigenvalues, specifically $\sum_{k \leq 0.02DS} (a_{t}^{k})^2/\sum_{k=1}^{DS} (a_{t}^{k})^2$. During inference, the proportion of the state in the top $2\%$ eigenspace increases monotonically, eventually dominating nearly $40\%$. This suggests that the loop effectively performs a computation analogous to power iteration, but constrained to real space with positive eigenvalues.
Thus, the Jacobian provides a meaningful and promising interpretation for the observed decrease of pseudo-energy during AKOrN inference.

 As a side note, in Section \ref{sec_B5},  a more detailed analysis of the Jacobian matrix implies that this alignment to the eigenspace is predominantly determined by a certain time-independent matrix. 

\section{Conclusion}
In this paper, we advanced the understanding and control of recurrent self-attention (SA) from a dynamical systems perspective. First, we generalized energy-based formulations to weaker symmetry and multi-head configurations closer to realistic settings. However, experiments showed that energy-regularized SA underperformed standard SA, suggesting other factors at play in practical models. Second, we analyzed the Jacobian matrix of standard SA architectures, revealing that normalization layers effectively regularize their spectral properties. We further clarify that Lyapunov exponents at criticality characterize high-performance inference, which indicates that the state update of SA is more dynamic than energy-constrained cases.  
We also argued how Jacobians provide quantitative insights into performance-enhancing regularization and pseudo-energy behaviors.

\paragraph{Limitations.}
In this work, we focused on recurrent SAs without positional encoding, masking, or MLP blocks, unlike looped Transformers in practice.  Investigating how these components alter state dynamics remains an interesting direction.
Regarding theoretical limitations, the upper bound in \Cref{prop:normalize} provides a conservative upper bound that is looser than empirical observations, an issue also noted in existing analyses of SA's Jacobians without normalization. Deriving tighter bounds remains an intriguing open theoretical problem.  Additionally, analytically justifying empirical phenomena, such as the Lyapunov exponent concentration (\Cref{sec:maximum_exponent}) and the Jacobian-based approximation (\Cref{sec:jac_energy}), by solving state dynamics would be challenging yet exciting themes for theory. We expect our findings to serve as a foundation for further theoretical and practical advancements in looped SA architectures and their rich inference dynamics.

\begin{ack}
We thank the reviewers for their insightful and helpful feedback on the manuscript, and Takeru Miyato for his valuable comments.
The authors acknowledge the funding support from JST FOREST (Grant No. JPMJFR226Q). RK is also supported by JSPS KAKENHI (Grant Nos. 22H05116, 23K16965).
\end{ack}

\bibliographystyle{plainnat} 
\bibliography{ref}


\newpage
\section*{NeurIPS Paper Checklist}

\begin{enumerate}

\item {\bf Claims}
    \item[] Question: Do the main claims made in the abstract and introduction accurately reflect the paper's contributions and scope?
    \item[] Answer: \answerYes{} 
    \item[] Justification: We claim that our abstract and instruction succinctly summarize the main contributions, and the introduction provides a thorough overview of the paper's scope with our motivation. 
    \item[] Guidelines:
    \begin{itemize}
        \item The answer NA means that the abstract and introduction do not include the claims made in the paper.
        \item The abstract and/or introduction should clearly state the claims made, including the contributions made in the paper and important assumptions and limitations. A No or NA answer to this question will not be perceived well by the reviewers. 
        \item The claims made should match theoretical and experimental results, and reflect how much the results can be expected to generalize to other settings. 
        \item It is fine to include aspirational goals as motivation as long as it is clear that these goals are not attained by the paper. 
    \end{itemize}

\item {\bf Limitations}
    \item[] Question: Does the paper discuss the limitations of the work performed by the authors?
    \item[] Answer: \answerYes{} 
    \item[] Justification: We explicitly discuss the limitations of our theoretical analysis in the limitations section of our paper, highlighting the need for further investigations.
    \item[] Guidelines:
    \begin{itemize}
        \item The answer NA means that the paper has no limitation while the answer No means that the paper has limitations, but those are not discussed in the paper. 
        \item The authors are encouraged to create a separate "Limitations" section in their paper.
        \item The paper should point out any strong assumptions and how robust the results are to violations of these assumptions (e.g., independence assumptions, noiseless settings, model well-specification, asymptotic approximations only holding locally). The authors should reflect on how these assumptions might be violated in practice and what the implications would be.
        \item The authors should reflect on the scope of the claims made, e.g., if the approach was only tested on a few datasets or with a few runs. In general, empirical results often depend on implicit assumptions, which should be articulated.
        \item The authors should reflect on the factors that influence the performance of the approach. For example, a facial recognition algorithm may perform poorly when image resolution is low or images are taken in low lighting. Or a speech-to-text system might not be used reliably to provide closed captions for online lectures because it fails to handle technical jargon.
        \item The authors should discuss the computational efficiency of the proposed algorithms and how they scale with dataset size.
        \item If applicable, the authors should discuss possible limitations of their approach to address problems of privacy and fairness.
        \item While the authors might fear that complete honesty about limitations might be used by reviewers as grounds for rejection, a worse outcome might be that reviewers discover limitations that aren't acknowledged in the paper. The authors should use their best judgment and recognize that individual actions in favor of transparency play an important role in developing norms that preserve the integrity of the community. Reviewers will be specifically instructed to not penalize honesty concerning limitations.
    \end{itemize}

\item {\bf Theory assumptions and proofs}
    \item[] Question: For each theoretical result, does the paper provide the full set of assumptions and a complete (and correct) proof?
    \item[] Answer: \answerYes{} 
    \item[] Justification: We clearly state our assumptions alongside the propositions and provide complete proofs in the appendix. This ensures that our theoretical results are well-supported and verifiable.
    \item[] Guidelines:
    \begin{itemize}
        \item The answer NA means that the paper does not include theoretical results. 
        \item All the theorems, formulas, and proofs in the paper should be numbered and cross-referenced.
        \item All assumptions should be clearly stated or referenced in the statement of any theorems.
        \item The proofs can either appear in the main paper or the supplemental material, but if they appear in the supplemental material, the authors are encouraged to provide a short proof sketch to provide intuition. 
        \item Inversely, any informal proof provided in the core of the paper should be complemented by formal proofs provided in appendix or supplemental material.
        \item Theorems and Lemmas that the proof relies upon should be properly referenced. 
    \end{itemize}

    \item {\bf Experimental result reproducibility}
    \item[] Question: Does the paper fully disclose all the information needed to reproduce the main experimental results of the paper to the extent that it affects the main claims and/or conclusions of the paper (regardless of whether the code and data are provided or not)?
    \item[] Answer: \answerYes{} 
    \item[] Justification: We include all essential details needed to replicate our main experimental results within the paper to ensure that our findings are reproducible.
    \item[] Guidelines:
    \begin{itemize}
        \item The answer NA means that the paper does not include experiments.
        \item If the paper includes experiments, a No answer to this question will not be perceived well by the reviewers: Making the paper reproducible is important, regardless of whether the code and data are provided or not.
        \item If the contribution is a dataset and/or model, the authors should describe the steps taken to make their results reproducible or verifiable. 
        \item Depending on the contribution, reproducibility can be accomplished in various ways. For example, if the contribution is a novel architecture, describing the architecture fully might suffice, or if the contribution is a specific model and empirical evaluation, it may be necessary to either make it possible for others to replicate the model with the same dataset, or provide access to the model. In general. releasing code and data is often one good way to accomplish this, but reproducibility can also be provided via detailed instructions for how to replicate the results, access to a hosted model (e.g., in the case of a large language model), releasing of a model checkpoint, or other means that are appropriate to the research performed.
        \item While NeurIPS does not require releasing code, the conference does require all submissions to provide some reasonable avenue for reproducibility, which may depend on the nature of the contribution. For example
        \begin{enumerate}
            \item If the contribution is primarily a new algorithm, the paper should make it clear how to reproduce that algorithm.
            \item If the contribution is primarily a new model architecture, the paper should describe the architecture clearly and fully.
            \item If the contribution is a new model (e.g., a large language model), then there should either be a way to access this model for reproducing the results or a way to reproduce the model (e.g., with an open-source dataset or instructions for how to construct the dataset).
            \item We recognize that reproducibility may be tricky in some cases, in which case authors are welcome to describe the particular way they provide for reproducibility. In the case of closed-source models, it may be that access to the model is limited in some way (e.g., to registered users), but it should be possible for other researchers to have some path to reproducing or verifying the results.
        \end{enumerate}
    \end{itemize}

\item {\bf Open access to data and code}
    \item[] Question: Does the paper provide open access to the data and code, with sufficient instructions to faithfully reproduce the main experimental results, as described in supplemental material?
    \item[] Answer: \answerNo{} 
    \item[] Justification: The code and data used in the experiments are not publicly available, and we do not plan to release them. As a result, the supplemental material does not contain instructions for reproducing the main experimental results.
    \item[] Guidelines:
    \begin{itemize}
        \item The answer NA means that paper does not include experiments requiring code.
        \item Please see the NeurIPS code and data submission guidelines (\url{https://nips.cc/public/guides/CodeSubmissionPolicy}) for more details.
        \item While we encourage the release of code and data, we understand that this might not be possible, so “No” is an acceptable answer. Papers cannot be rejected simply for not including code, unless this is central to the contribution (e.g., for a new open-source benchmark).
        \item The instructions should contain the exact command and environment needed to run to reproduce the results. See the NeurIPS code and data submission guidelines (\url{https://nips.cc/public/guides/CodeSubmissionPolicy}) for more details.
        \item The authors should provide instructions on data access and preparation, including how to access the raw data, preprocessed data, intermediate data, and generated data, etc.
        \item The authors should provide scripts to reproduce all experimental results for the new proposed method and baselines. If only a subset of experiments are reproducible, they should state which ones are omitted from the script and why.
        \item At submission time, to preserve anonymity, the authors should release anonymized versions (if applicable).
        \item Providing as much information as possible in supplemental material (appended to the paper) is recommended, but including URLs to data and code is permitted.
    \end{itemize}

\item {\bf Experimental setting/details}
    \item[] Question: Does the paper specify all the training and test details (e.g., data splits, hyperparameters, how they were chosen, type of optimizer, etc.) necessary to understand the results?
    \item[] Answer: \answerYes{} 
    \item[] Justification: We detail all necessary training parameters to ensure that our experimental results can be faithfully reproduced.
    \item[] Guidelines:
    \begin{itemize}
        \item The answer NA means that the paper does not include experiments.
        \item The experimental setting should be presented in the core of the paper to a level of detail that is necessary to appreciate the results and make sense of them.
        \item The full details can be provided either with the code, in appendix, or as supplemental material.
    \end{itemize}

\item {\bf Experiment statistical significance}
    \item[] Question: Does the paper report error bars suitably and correctly defined or other appropriate information about the statistical significance of the experiments?
    \item[] Answer: \answerYes{} 
    \item[] Justification: We include error bars and standard deviations in our results where applicable, ensuring that the statistical significance of our findings is clear and well-documented.
    \item[] Guidelines:
    \begin{itemize}
        \item The answer NA means that the paper does not include experiments.
        \item The authors should answer "Yes" if the results are accompanied by error bars, confidence intervals, or statistical significance tests, at least for the experiments that support the main claims of the paper.
        \item The factors of variability that the error bars are capturing should be clearly stated (for example, train/test split, initialization, random drawing of some parameter, or overall run with given experimental conditions).
        \item The method for calculating the error bars should be explained (closed form formula, call to a library function, bootstrap, etc.)
        \item The assumptions made should be given (e.g., Normally distributed errors).
        \item It should be clear whether the error bar is the standard deviation or the standard error of the mean.
        \item It is OK to report 1-sigma error bars, but one should state it. The authors should preferably report a 2-sigma error bar than state that they have a 96\% CI, if the hypothesis of Normality of errors is not verified.
        \item For asymmetric distributions, the authors should be careful not to show in tables or figures symmetric error bars that would yield results that are out of range (e.g. negative error rates).
        \item If error bars are reported in tables or plots, The authors should explain in the text how they were calculated and reference the corresponding figures or tables in the text.
    \end{itemize}

\item {\bf Experiments compute resources}
    \item[] Question: For each experiment, does the paper provide sufficient information on the computer resources (type of compute workers, memory, time of execution) needed to reproduce the experiments?
    \item[] Answer: \answerYes{} 
    \item[] Justification: Detailed descriptions of the computational resources used, including hardware specifics and implementation details, are provided in the Appendix to aid in reproducing our experiments.
    \item[] Guidelines:
    \begin{itemize}
        \item The answer NA means that the paper does not include experiments.
        \item The paper should indicate the type of compute workers CPU or GPU, internal cluster, or cloud provider, including relevant memory and storage.
        \item The paper should provide the amount of compute required for each of the individual experimental runs as well as estimate the total compute. 
        \item The paper should disclose whether the full research project required more compute than the experiments reported in the paper (e.g., preliminary or failed experiments that didn't make it into the paper). 
    \end{itemize}
    
\item {\bf Code of ethics}
    \item[] Question: Does the research conducted in the paper conform, in every respect, with the NeurIPS Code of Ethics \url{https://neurips.cc/public/EthicsGuidelines}?
    \item[] Answer: \answerYes{} 
    \item[] Justification: After thoroughly reviewing the NeurIPS Code of Ethics, we confirm that our research adheres to all the specified guidelines.
    \item[] Guidelines:
    \begin{itemize}
        \item The answer NA means that the authors have not reviewed the NeurIPS Code of Ethics.
        \item If the authors answer No, they should explain the special circumstances that require a deviation from the Code of Ethics.
        \item The authors should make sure to preserve anonymity (e.g., if there is a special consideration due to laws or regulations in their jurisdiction).
    \end{itemize}

\item {\bf Broader impacts}
    \item[] Question: Does the paper discuss both potential positive societal impacts and negative societal impacts of the work performed?
    \item[] Answer:\answerNA{} 
    \item[] Justification: Given the theoretical nature of our work, we assess that it does not directly engage with societal impacts.
    \item[] Guidelines:
    \begin{itemize}
        \item The answer NA means that there is no societal impact of the work performed.
        \item If the authors answer NA or No, they should explain why their work has no societal impact or why the paper does not address societal impact.
        \item Examples of negative societal impacts include potential malicious or unintended uses (e.g., disinformation, generating fake profiles, surveillance), fairness considerations (e.g., deployment of technologies that could make decisions that unfairly impact specific groups), privacy considerations, and security considerations.
        \item The conference expects that many papers will be foundational research and not tied to particular applications, let alone deployments. However, if there is a direct path to any negative applications, the authors should point it out. For example, it is legitimate to point out that an improvement in the quality of generative models could be used to generate deepfakes for disinformation. On the other hand, it is not needed to point out that a generic algorithm for optimizing neural networks could enable people to train models that generate Deepfakes faster.
        \item The authors should consider possible harms that could arise when the technology is being used as intended and functioning correctly, harms that could arise when the technology is being used as intended but gives incorrect results, and harms following from (intentional or unintentional) misuse of the technology.
        \item If there are negative societal impacts, the authors could also discuss possible mitigation strategies (e.g., gated release of models, providing defenses in addition to attacks, mechanisms for monitoring misuse, mechanisms to monitor how a system learns from feedback over time, improving the efficiency and accessibility of ML).
    \end{itemize}
    
\item {\bf Safeguards}
    \item[] Question: Does the paper describe safeguards that have been put in place for responsible release of data or models that have a high risk for misuse (e.g., pretrained language models, image generators, or scraped datasets)?
    \item[] Answer: \answerNA{} 
    \item[] Justification: Our research does not involve the release of data or models that pose high risks for misuse, hence specific safeguards are not required.
    \item[] Guidelines:
    \begin{itemize}
        \item The answer NA means that the paper poses no such risks.
        \item Released models that have a high risk for misuse or dual-use should be released with necessary safeguards to allow for controlled use of the model, for example by requiring that users adhere to usage guidelines or restrictions to access the model or implementing safety filters. 
        \item Datasets that have been scraped from the Internet could pose safety risks. The authors should describe how they avoided releasing unsafe images.
        \item We recognize that providing effective safeguards is challenging, and many papers do not require this, but we encourage authors to take this into account and make a best faith effort.
    \end{itemize}

\item {\bf Licenses for existing assets}
    \item[] Question: Are the creators or original owners of assets (e.g., code, data, models), used in the paper, properly credited and are the license and terms of use explicitly mentioned and properly respected?
    \item[] Answer: \answerNA{} 
    \item[] Justification: Our study does not use any external assets, thus no licensing or attribution issues are applicable.
    \item[] Guidelines:
    \begin{itemize}
        \item The answer NA means that the paper does not use existing assets.
        \item The authors should cite the original paper that produced the code package or dataset.
        \item The authors should state which version of the asset is used and, if possible, include a URL.
        \item The name of the license (e.g., CC-BY 4.0) should be included for each asset.
        \item For scraped data from a particular source (e.g., website), the copyright and terms of service of that source should be provided.
        \item If assets are released, the license, copyright information, and terms of use in the package should be provided. For popular datasets, \url{paperswithcode.com/datasets} has curated licenses for some datasets. Their licensing guide can help determine the license of a dataset.
        \item For existing datasets that are re-packaged, both the original license and the license of the derived asset (if it has changed) should be provided.
        \item If this information is not available online, the authors are encouraged to reach out to the asset's creators.
    \end{itemize}

\item {\bf New assets}
    \item[] Question: Are new assets introduced in the paper well documented and is the documentation provided alongside the assets?
    \item[] Answer: \answerNA{} 
    \item[] Justification: No new assets are introduced in our paper, so there are no associated documentation requirements.
    \item[] Guidelines:
    \begin{itemize}
        \item The answer NA means that the paper does not release new assets.
        \item Researchers should communicate the details of the dataset/code/model as part of their submissions via structured templates. This includes details about training, license, limitations, etc. 
        \item The paper should discuss whether and how consent was obtained from people whose asset is used.
        \item At submission time, remember to anonymize your assets (if applicable). You can either create an anonymized URL or include an anonymized zip file.
    \end{itemize}

\item {\bf Crowdsourcing and research with human subjects}
    \item[] Question: For crowdsourcing experiments and research with human subjects, does the paper include the full text of instructions given to participants and screenshots, if applicable, as well as details about compensation (if any)? 
    \item[] Answer: \answerNA{} 
    \item[] Justification: Our paper does not involve crowdsourcing nor research with human subjects.
    \item[] Guidelines:
    \begin{itemize}
        \item The answer NA means that the paper does not involve crowdsourcing nor research with human subjects.
        \item Including this information in the supplemental material is fine, but if the main contribution of the paper involves human subjects, then as much detail as possible should be included in the main paper. 
        \item According to the NeurIPS Code of Ethics, workers involved in data collection, curation, or other labor should be paid at least the minimum wage in the country of the data collector. 
    \end{itemize}

\item {\bf Institutional review board (IRB) approvals or equivalent for research with human subjects}
    \item[] Question: Does the paper describe potential risks incurred by study participants, whether such risks were disclosed to the subjects, and whether Institutional Review Board (IRB) approvals (or an equivalent approval/review based on the requirements of your country or institution) were obtained?
    \item[] Answer: \answerNA{} 
    \item[] Justification: Our paper does not involve crowdsourcing nor research with human subjects.
    \item[] Guidelines:
    \begin{itemize}
        \item The answer NA means that the paper does not involve crowdsourcing nor research with human subjects.
        \item Depending on the country in which research is conducted, IRB approval (or equivalent) may be required for any human subjects research. If you obtained IRB approval, you should clearly state this in the paper. 
        \item We recognize that the procedures for this may vary significantly between institutions and locations, and we expect authors to adhere to the NeurIPS Code of Ethics and the guidelines for their institution. 
        \item For initial submissions, do not include any information that would break anonymity (if applicable), such as the institution conducting the review.
    \end{itemize}

\item {\bf Declaration of LLM usage}
    \item[] Question: Does the paper describe the usage of LLMs if it is an important, original, or non-standard component of the core methods in this research? Note that if the LLM is used only for writing, editing, or formatting purposes and does not impact the core methodology, scientific rigorousness, or originality of the research, declaration is not required.
    \item[] Answer: \answerNA{} 
    \item[] Justification: The core methodology of this research does not involve the use of LLMs in any important, original, or non-standard way.
    \item[] Guidelines:
    \begin{itemize}
        \item The answer NA means that the core method development in this research does not involve LLMs as any important, original, or non-standard components.
        \item Please refer to our LLM policy (\url{https://neurips.cc/Conferences/2025/LLM}) for what should or should not be described.
    \end{itemize}

\end{enumerate}

\newpage
\appendix
\renewcommand{\thetable}{S.\arabic{table}}
\renewcommand{\thefigure}{S.\arabic{figure}}
\setcounter{table}{0}
\setcounter{figure}{0}
\renewcommand{\theHtable}{S.\arabic{table}}
\renewcommand{\theHfigure}{S.\arabic{figure}}

\section{Proofs}
\subsection{Lemmas}
We use the following lemmas in our derivation.
\begin{lemma}[\citet{singh2021analytic}]
\label{lem:matrix}
\begin{align}
    \frac{\partial \bm{A}\bm{X}\bm{B}}{\partial \bm{X}} = \bm{A}\otimes \bm{B}^\top
\end{align}
\end{lemma}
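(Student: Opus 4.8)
The plan is to verify the identity entrywise and then match the resulting indices to a Kronecker product, taking care to use the numerator-layout convention declared in the Notations paragraph. First I would fix the dimensions: write $\bm{X}\in\mathbb{R}^{m\times n}$, $\bm{A}\in\mathbb{R}^{p\times m}$, $\bm{B}\in\mathbb{R}^{n\times q}$, so that $\bm{Y}\coloneqq\bm{A}\bm{X}\bm{B}\in\mathbb{R}^{p\times q}$, and expand a generic output entry as $Y_{ij}=\sum_{k=1}^{m}\sum_{l=1}^{n}A_{ik}\,X_{kl}\,B_{lj}$.

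Next I would differentiate a single output entry with respect to a single input entry. Because $Y_{ij}$ is linear in the entries of $\bm{X}$, exactly one term in the double sum survives, giving $\partial Y_{ij}/\partial X_{kl}=A_{ik}B_{lj}$, independent of $\bm{X}$ (as expected, since $\bm{X}\mapsto\bm{A}\bm{X}\bm{B}$ is a linear map).

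The key step is then to recognize $A_{ik}B_{lj}=A_{ik}(\bm{B}^\top)_{jl}$ as precisely the entry of $\bm{A}\otimes\bm{B}^\top$ sitting in the $(i,k)$ block at within-block position $(j,l)$, i.e.\ at row $(i-1)q+j$ and column $(k-1)n+l$. Under the numerator layout, the Jacobian $\partial\bm{Y}/\partial\bm{X}$ is obtained by indexing its rows by the output pair $(i,j)$ and its columns by the input pair $(k,l)$ in the row-major order that sends $(i,j)\mapsto(i-1)q+j$ and $(k,l)\mapsto(k-1)n+l$; with this ordering the Jacobian entry coincides with the Kronecker entry just computed, so $\partial(\bm{A}\bm{X}\bm{B})/\partial\bm{X}=\bm{A}\otimes\bm{B}^\top$.

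The hard part here is not analysis but bookkeeping: one must be scrupulous about which vectorization ordering is paired with the numerator layout, since the opposite (column-major) convention would instead yield $\bm{B}^\top\otimes\bm{A}$. Once the convention is pinned down, as it is by the paper's stated layout, the check is immediate; this is a standard matrix-calculus fact stated in the given form by \citet{singh2021analytic}, so an alternative is simply to cite it.
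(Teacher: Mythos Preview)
Your entrywise verification is correct and is the standard way to establish this identity; you are also right that the bookkeeping around vectorization order is the only subtlety. The paper, however, does not prove this lemma at all: it simply states it and attributes it to \citet{singh2021analytic}, exactly the ``alternative'' you mention at the end. So there is nothing to compare against---your proposal supplies a proof where the paper gives only a citation.
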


\begin{lemma}
\label{lem:pi}
Let $\bm{Y} \in \mathbb{R}^{S \times D}$ be an input matrix. Then, the Jacobian of $\Pi$ with respect to $Y$ is given by
\begin{align}
\frac{\partial \Pi(\bm{Y})}{\partial \bm{Y}} = \operatorname{blockdiag}\left(\{\frac{1}{\|\bm{Y}_{[i,:]}\|}(\bm{I}_{D} - \frac{\bm{Y}_{[i,:]}\bm{Y}_{[i,:]}^\top}{\|\bm{Y}_{[i,:]}\|^2}) \}_{i=1}^{S}\right)
\end{align}
\end{lemma}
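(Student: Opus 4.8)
The plan is to exploit two facts: that $\Pi$ decouples across tokens, and that for a single token it is the standard Euclidean normalization map whose derivative is well known. First I would note that, by the token-wise definition $\Pi(\bm{Y})_{[i,:]} = \bm{Y}_{[i,:]}/\|\bm{Y}_{[i,:]}\|$, the $i$-th output row depends only on the $i$-th input row, so $\partial \Pi(\bm{Y})_{[i,:]}/\partial \bm{Y}_{[j,:]} = \bm{0}$ for all $j \neq i$. Grouping the vectorization of $\bm{Y}$ (and of $\Pi(\bm{Y})$) token by token, this immediately forces the full Jacobian into block-diagonal form with $S$ blocks of size $D \times D$, the $i$-th block being $\partial \Pi(\bm{Y})_{[i,:]}/\partial \bm{Y}_{[i,:]}$. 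The problem thus reduces to computing, for a fixed nonzero vector $\bm{y} \in \mathbb{R}^D$ (nonzero being implicit, since the expression only makes sense when $\|\bm{Y}_{[i,:]}\| > 0$), the Jacobian of $f(\bm{y}) = \bm{y}/\|\bm{y}\|$.

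Next I would differentiate $f(\bm{y}) = \|\bm{y}\|^{-1}\bm{y}$ by the product rule in the numerator layout used throughout the paper. The two ingredients are $\partial \bm{y}/\partial \bm{y} = \bm{I}_D$ and $\partial \|\bm{y}\|^{-1}/\partial \bm{y} = -\|\bm{y}\|^{-3}\bm{y}^\top$ (a row vector), where the latter follows from $\partial \|\bm{y}\|/\partial \bm{y} = \bm{y}^\top/\|\bm{y}\|$ and the chain rule. Combining them gives
\[
\frac{\partial f(\bm{y})}{\partial \bm{y}} = \|\bm{y}\|^{-1}\bm{I}_D \;-\; \|\bm{y}\|^{-3}\bm{y}\bm{y}^\top = \frac{1}{\|\bm{y}\|}\left(\bm{I}_D - \frac{\bm{y}\bm{y}^\top}{\|\bm{y}\|^2}\right),
\]
which is exactly the claimed $i$-th diagonal block upon setting $\bm{y} = \bm{Y}_{[i,:]}$. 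Assembling these blocks along the diagonal in the token order used for vectorization yields the stated formula, i.e.\ $\partial \Pi(\bm{Y})/\partial \bm{Y} = \operatorname{blockdiag}\big(\{\tfrac{1}{\|\bm{Y}_{[i,:]}\|}(\bm{I}_D - \bm{Y}_{[i,:]}\bm{Y}_{[i,:]}^\top/\|\bm{Y}_{[i,:]}\|^2)\}_{i=1}^S\big)$.

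I do not expect a genuine analytic obstacle here — the lemma is essentially the composition of the classical derivative of the sphere-normalization map with the separability of $\Pi$ across tokens. The only point demanding care is bookkeeping of the layout conventions: one must keep the numerator-layout row/column orientation consistent (so that the \emph{symmetric} outer product $\bm{y}\bm{y}^\top$, rather than its transpose, appears in the projector $\bm{I}_D - \bm{y}\bm{y}^\top/\|\bm{y}\|^2$), and one must order the per-token blocks on the diagonal in the same way $\bm{Y}$ is vectorized, so that the cross-token zero blocks genuinely land off the diagonal. Once these conventions are pinned down, the result follows directly, and it will be used downstream exactly in the form needed for \Cref{prop:normalize} (the $\Pi$ factor there is this projector, and the $1/\|\bm{y}\|$ prefactor is what produces the $1/R$ dependence).
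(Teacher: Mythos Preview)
Your proposal is correct and follows essentially the same approach as the paper: first observe that $\Pi$ acts row-wise so the Jacobian is block-diagonal, then compute the derivative of $\bm{y}\mapsto\bm{y}/\|\bm{y}\|$ for a single row to obtain each block. Your version is slightly more explicit about the product-rule step, but the structure and result coincide.
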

\begin{proof}
Since $\Pi$ operates independently on each row of $Y$, the Jacobian is block-diagonal, with each block corresponding to the derivative of a single normalized row:
\begin{align}
\frac{\partial \Pi(\bm{Y})}{\partial \bm{Y}} = \operatorname{blockdiag}(\{\frac{\partial \Pi(\bm{Y})_{[i,:]}}{\partial \bm{Y}_{[i,:]}}\}_{i=1}^{S}).
\end{align}
For each row, we compute the gradient of the normalized vector:
\begin{align}
\frac{\partial \Pi(\bm{Y})_{[i,:]}}{\partial \bm{Y}_{[i,:]}} &= \frac{\partial \bm{Y}_{[i,:]}/\| \bm{Y}_{[i,:]} \|}{\partial \bm{Y}_{[i,:]}} \\
&= \frac{1}{\|\bm{Y}_{[i,:]}\|}(\bm{I}_{D} - \frac{\bm{Y}_{[i,:]}\bm{Y}_{[i,:]}^\top}{\|\bm{Y}_{[i,:]}\|^2}) .
\end{align}
\end{proof}

\begin{lemma}
\label{lem:rmsnorm}
Let $\bm{Y} \in \mathbb{R}^{S \times D}$ be an input matrix. Then, the Jacobian of $\operatorname{RMSNorm}$ with respect to $Y$ is given by
\begin{align}
\frac{\partial \operatorname{RMSNorm}(\bm{Y})}{\partial \bm{Y}} = \operatorname{blockdiag}\left(\{\frac{1}{\|\bm{Y}_{[i,:]}\|}\operatorname{diag}(\bm{\gamma})(\bm{I}_{D} - \frac{\bm{Y}_{[i,:]}\bm{Y}_{[i,:]}^\top}{\|\bm{Y}_{[i,:]}\|^2}) \}_{i=1}^{S}\right)
\end{align}
\end{lemma}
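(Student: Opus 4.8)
\textbf{Proof proposal for Lemma~\ref{lem:rmsnorm}.}
The plan is to reduce everything to Lemma~\ref{lem:pi} by exploiting the two structural facts about $\operatorname{RMSNorm}$: it acts independently on each row of $\bm{Y}$, and it differs from $\Pi$ only by left-multiplication with the \emph{constant} matrix $\operatorname{diag}(\bm{\gamma})$. First I would invoke the row-wise definition $\operatorname{RMSNorm}(\bm{Y})_{[i,:]} = \operatorname{diag}(\bm{\gamma})\,\Pi(\bm{Y})_{[i,:]}$. Since the $i$-th output row depends only on the $i$-th input row, the Jacobian is block-diagonal exactly as in the proof of Lemma~\ref{lem:pi},
\begin{align}
\frac{\partial \operatorname{RMSNorm}(\bm{Y})}{\partial \bm{Y}} = \operatorname{blockdiag}\left(\left\{\frac{\partial \operatorname{RMSNorm}(\bm{Y})_{[i,:]}}{\partial \bm{Y}_{[i,:]}}\right\}_{i=1}^{S}\right).
\end{align}

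Next, for a single block, the chain rule (numerator layout) applied to the composition $\bm{Y}_{[i,:]} \mapsto \Pi(\bm{Y})_{[i,:]} \mapsto \operatorname{diag}(\bm{\gamma})\,\Pi(\bm{Y})_{[i,:]}$ gives
\begin{align}
\frac{\partial \operatorname{RMSNorm}(\bm{Y})_{[i,:]}}{\partial \bm{Y}_{[i,:]}} = \operatorname{diag}(\bm{\gamma}) \,\frac{\partial \Pi(\bm{Y})_{[i,:]}}{\partial \bm{Y}_{[i,:]}},
\end{align}
because $\operatorname{diag}(\bm{\gamma})$ is independent of $\bm{Y}$ and hence its Jacobian is itself. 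Substituting the per-row formula from Lemma~\ref{lem:pi}, namely $\partial \Pi(\bm{Y})_{[i,:]}/\partial \bm{Y}_{[i,:]} = \tfrac{1}{\|\bm{Y}_{[i,:]}\|}(\bm{I}_{D} - \bm{Y}_{[i,:]}\bm{Y}_{[i,:]}^\top/\|\bm{Y}_{[i,:]}\|^2)$, and reassembling the blocks yields the claimed expression.

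There is essentially no hard step here; the whole argument is bookkeeping on top of Lemma~\ref{lem:pi}. The only points that require a little care are (i) keeping $\operatorname{diag}(\bm{\gamma})$ on the \emph{left} of the Jacobian factor, which is dictated by the numerator-layout convention stated in the preliminaries, and (ii) confirming that the block-diagonal structure of $\Pi$ is preserved under left-multiplication by a block-diagonal (indeed row-wise identical) matrix $\operatorname{blockdiag}(\{\operatorname{diag}(\bm{\gamma})\}_{i=1}^S)$, which is immediate. Hence the lemma follows.
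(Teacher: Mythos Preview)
Your proposal is correct and follows exactly the same approach as the paper: the paper's proof simply notes that $\operatorname{RMSNorm}(\bm{Y})_{[i,:]} = \operatorname{diag}(\bm{\gamma})\,\Pi(\bm{Y})_{[i,:]}$ and then says the result follows from Lemma~\ref{lem:pi}. You have merely spelled out the block-diagonal structure and the chain-rule step that the paper leaves implicit.
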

\begin{proof}
Since $\operatorname{RMSNorm}$ is expressed as 
\begin{align}
   \operatorname{RMSNorm}(\bm{Y})_{[i,:]} = \operatorname{diag}(\bm{\gamma})\Pi(\bm{Y})_{[i,:]},
\end{align}
the result follows from~\cref{lem:pi}.
\end{proof}

\if0
\begin{lemma}
\label{lem:proj}
Let $X, \bm{Y} \in \mathbb{R}^{S \times D}$ be matrices and $X$ has the constraint $\|\bm{X}_{[i,:]}\| = r$ for all $i=1,\cdots , S$. Then, the Jacobian of $\operatorname{Proj}_{X}(\bm{Y})$ with respect to $X$ is given by

\begin{align}
\frac{\partial \operatorname{Proj}_{X}(\bm{Y})}{\partial X} 
&= \operatorname{blockdiag} \left(\left\{\bm{I}_{D}-\frac{\bm{X}_{[i,:]} \bm{X}_{[i,:]}^\top}{r^2}\right\}_{i=1}^{S} \right) \frac{\partial \bm{Y}}{\partial X} \notag \\
&\quad - \operatorname{blockdiag} \left( \left\{\frac{1}{r^2}\left( \bm{Y}_{[i,:]}^\top \bm{X}_{[i,:]} I_D + \bm{X}_{[i,:]} \bm{Y}_{[i,:]}^\top \right)\right\}_{i=1}^{S} \right).
\end{align}
\end{lemma}

\begin{proof}
Since $\operatorname{Proj}_{X}(\bm{Y})$ operates row-wise, its Jacobian is block-structured. Each block corresponds to the derivative of a projection of the $i$-th row of $Y$ with respect to the $j$-th row of $X$.

\paragraph{Diagonal blocks ($i = j$).}
We compute the derivative of the projection:
\begin{align}
&\quad  \frac{\partial \operatorname{Proj}_{X}(\bm{Y})}{\partial X}_{[(i-1)D:iD,\; (i-1)D:iD]} \\
&= \frac{\partial \left( (I - \frac{\bm{X}_{[i,:]} \bm{X}_{[i,:]}^\top}{r^2}) \bm{Y}_{[i,:]} \right)}{\partial \bm{X}_{[i,:]}} \\
&= \frac{\partial \bm{Y}_{[i,:]}}{\partial \bm{X}_{[i,:]}} 
- \frac{1}{r^2}\left( I_D \otimes \bm{Y}_{[i,:]}^\top \bm{X}_{[i,:]} + \bm{X}_{[i,:]} \otimes \bm{Y}_{[i,:]}^\top 
+ \bm{X}_{[i,:]} \bm{X}_{[i,:]}^\top \frac{\partial \bm{Y}_{[i,:]}}{\partial \bm{X}_{[i,:]}}\right) \\
&= (I_D - \frac{\bm{X}_{[i,:]} \bm{X}_{[i,:]}^\top}{r^2}) \frac{\partial \bm{Y}_{[i,:]}}{\partial \bm{X}_{[i,:]}} 
- \frac{1}{r^2}\left( \bm{Y}_{[i,:]}^\top \bm{X}_{[i,:]} I_D + \bm{X}_{[i,:]} \bm{Y}_{[i,:]}^\top  \right).
\end{align}

\paragraph{Off-diagonal blocks ($i \neq j$).}
In this case, the projection operator does not depend on $\bm{X}_{[i,:]}$:
\begin{align}
\frac{\partial \operatorname{Proj}_{X}(\bm{Y})}{\partial X}_{[(i-1)D:iD,\; (j-1)D:jD]} 
&= \frac{\partial \left( (I - \bm{X}_{[i,:]} \bm{X}_{[i,:]}^\top) \bm{Y}_{[i,:]} \right)}{\partial \bm{X}_{[j,:]}} \\
&= (I - \frac{\bm{X}_{[i,:]} \bm{X}_{[i,:]}^\top}{r^2}) \frac{\partial \bm{Y}_{[i,:]}}{\partial \bm{X}_{[j,:]}}.
\end{align}

\paragraph{Combining the blocks.}
Putting the diagonal and off-diagonal blocks together, we obtain:
\begin{align}
\frac{\partial \operatorname{Proj}_{X}(\bm{Y})}{\partial X} =  \operatorname{blockdiag} \left(\left\{\bm{I}_{D}-\frac{\bm{X}_{[i,:]} \bm{X}_{[i,:]}^\top}{r^2}\right\}_{i=1}^{S} \right) \frac{\partial \bm{Y}}{\partial X} - \operatorname{blockdiag} \left( \left\{\frac{1}{r^2}\left( \bm{Y}_{[i,:]}^\top \bm{X}_{[i,:]} I_D + \bm{X}_{[i,:]} \bm{Y}_{[i,:]}^\top \right)\right\}_{i=1}^{S} \right).
\end{align}
\end{proof}

\begin{lemma}
\label{lem:osci}
Let $\bm{Y} \in \mathbb{R}^{S \times D}$ be an input matrix, and $Y$ is decomposed into oscillators as $\widetilde{\bm{X}}_{i,j}\coloneqq \bm{X}_{[i,(j-1)N+1:jN]}, \widetilde{\bm{Y}}_{i,j}\coloneqq \bm{Y}_{[i,(j-1)N+1:jN]}\in \mathbb{R}^{N}$ with constraints $\|\widetilde{\bm{X}}_{i,j}\|=r\; (i=1\cdots S,\;j=1\cdots D/N)$. Then, the Jacobian of $\operatorname{Omg}^{(\text{osc})}, \operatorname{Proj}^{(\text{osc})}_{X},$ and $\Pi^{(\text{osc})}$ are given as
\begin{align}
\frac{\partial \operatorname{Omg}^{(\text{osc})}(\bm{X})}{\partial X} &= \operatorname{blockdiag}(\{\Omega_{i,j}\}_{i,j}) \\
\frac{\partial \operatorname{Proj}^{(\text{osc})}_{X}(\bm{Y})}{\partial X} &= \operatorname{blockdiag} \left(\left\{\bm{I}_{D}-\frac{\widetilde{\bm{X}}_{i,j} \widetilde{\bm{X}}_{i,j}^\top}{r^2}\right\}_{i=1}^{S} \right) \frac{\partial \bm{Y}}{\partial X} - \operatorname{blockdiag} \left( \left\{\frac{1}{r^2}\left( \widetilde{\bm{Y}}_{i,j}^\top \widetilde{\bm{X}}_{i,j} I_D + \widetilde{\bm{X}}_{i,j} \widetilde{\bm{Y}}_{i,j}^\top \right)\right\}_{i,j} \right)\\
\frac{\partial \Pi^{(\text{osc})}(\bm{Y})}{\partial \bm{Y}} &= \operatorname{blockdiag}\left(\{\frac{1}{\|\widetilde{\bm{Y}}_{i,j}\|}(I_{N} - \frac{\widetilde{\bm{Y}}_{i,j}\widetilde{\bm{Y}}_{i,j}^\top}{\|\widetilde{\bm{Y}}_{i,j}\|^2}) \}_{i,j}\right).
\end{align}
\end{lemma}
\begin{proof}
Because these operators are applied independently to each oscillator, the results follow from \cref{lem:pi,lem:proj}.
\end{proof}
\fi

\subsection{Proof of Proposition~\ref{prop:single}}
\label{proof:single}
\begin{em}
{\bf \Cref{prop:single} is restated.} \\
Consider the continuous-time dynamics for single-head SA equipped with projection~\eqref{eq:cont_SA}.
The energy function
    \begin{align}
       E_{\text{single}}(\bm{X}) &= - \sum_{i,j} \exp\left( \beta \bm{X}_{[i,:]}^\top \bm{W}^{Q} \bm{W}^{K\top} \bm{X}_{[j,:]}\right)
\end{align}
is monotonically decreasing as $d E_{\text{single}}(\bm{X})/dt \leq 0$
    under the condition: 
\begin{align}
       \bm{W}^{V} = (\bm{W}^{K}\bm{W}^{Q\top} + \bm{W}^{Q} \bm{W}^{K\top})/2.
\end{align}
\end{em}
\begin{proof}
Let $\Delta = \operatorname{softmax}(\beta \bm{X} \bm{W}^{Q} \bm{W}^{K\top} \bm{X}^{\top} ) \bm{X} \bm{W}^{V}$ and let $\bm{A}= \bm{W}^{Q} \bm{W}^{K\top}$. 
The first-order derivative of $E_{\text{single}}(\bm{X})$ with respect to $\bm{X}_{[i,:]}$ is:  
\begin{align}
    \frac{d E_{\text{single}}(\bm{X})}{d \bm{X}_{[i,:]}} 
    &= - \sum_{j\neq i} \frac{d \exp\left( \beta \bm{X}_{[i,:]}^\top \bm{A} \bm{X}_{[j,:]}\right)}{d \bm{X}_{[i,:]}} - \sum_{j\neq i} \frac{d \exp\left( \beta \bm{X}_{[j,:]}^\top \bm{A} \bm{X}_{[i,:]}\right)}{d \bm{X}_{[i,:]}}  \\
    &\quad -\frac{d \exp\left( \beta \bm{X}_{[i,:]}^\top \bm{A} \bm{X}_{[i,:]}\right)}{d \bm{X}_{[i,:]}} \\
    &= - \sum_{j\neq i} \exp\left( \beta \bm{X}_{[i,:]}^\top \bm{A} \bm{X}_{[j,:]}\right)\bm{A}^\top \bm{X}_{[j,:]} - \sum_{j\neq i} \exp\left( \beta \bm{X}_{[i,:]}^\top \bm{A} \bm{X}_{[j,:]}\right)\bm{A} \bm{X}_{[j,:]} \\
    & \quad -\exp\left( \beta \bm{X}_{[i,:]}^\top \bm{A} \bm{X}_{[i,:]}\right)(\bm{A}^\top + \bm{A}) \bm{X}_{[i,:]} \\
    &= - \sum_{j} \exp\left( \beta \bm{X}_{[i,:]}^\top \bm{A} \bm{X}_{[j,:]}\right)(\bm{A}^\top + \bm{A})\bm{X}_{[j,:]}.
\end{align}
Under the given condition on the weights, $\bm{W}^{V} = (\bm{A}^{\top} + \bm{A})/2, $we have: 
\begin{align}
    \Delta_{[i,:]} &= (\operatorname{softmax}(\beta \bm{X}\bm{A}\bm{X}^{\top})\bm{X}(\bm{A}^\top + \bm{A}))^{\top}_{[i,:]} / 2 \\
    &= \sum_{j} \frac{\exp\left( \beta \bm{X}_{[i,:]}^\top \bm{A} \bm{X}_{[j,:]}\right)}{Z_{i}}(\bm{A}^\top + \bm{A})\bm{X}_{[j,:]} / 2,
\end{align}
where $Z_{i} = \sum_{j'} \exp\left( \beta \bm{X}_{[i,:]}^\top \bm{A} \bm{X}_{[j',:]}\right)$ is the normalization term.

Then, we have,
\begin{align}
   \frac{d E_{\text{single}}(\bm{X})}{dt}
   &= \frac{d E_{\text{single}}(\bm{X})}{d\bm{X}} \cdot \frac{d\bm{X}}{dt} \\
   &= -  \sum_{i} \left(\frac{d E_{\text{single}}(\bm{X})}{d \bm{X}_{[i,:]}}\cdot (\bm{I}_{D}-\bm{X}_{[i,:]}\bm{X}_{[i,:]}^{\top}) \Delta_{[i,:]} \right) \\
   &= -  \sum_{i} \left(\sum_{j} \exp\left( \beta \bm{X}_{[i,:]}^\top \bm{A} \bm{X}_{[j,:]}\right)(\bm{A}^\top + \bm{A})\bm{X}_{[j,:]}\cdot (\bm{I}_{D}-\bm{X}_{[i,:]}\bm{X}_{[i,:]}^{\top}) \Delta_{[i,:]} \right) \\
   &= -  2\sum_{i} \left(Z_{i} \Delta_{[i,:]}\cdot (\bm{I}_{D}-\bm{X}_{[i,:]}\bm{X}_{[i,:]}^{\top})\Delta_{[i,:]} \right) \\
   &= -  2\sum_{i} \left(Z_{i} \Delta_{[i,:]}^{\top}(\bm{I}_{D}-\bm{X}_{[i,:]}\bm{X}_{[i,:]}^{\top}) \Delta_{[i,:]} \right) \\
   &\le 0,
\end{align}
where, in the last inequality, we used the fact that the matrix $\bm{I}_{D} - \bm{X}_{[i,:]} \bm{X}_{[i,:]}^{\top}$ is positive semi-definite.
\end{proof}

\subsection{Proof of Proposition~\ref{prop:multi}}
\label{proof:multi}
\begin{em}
{\bf \Cref{prop:multi} is restated.}
Consider the continuous-time dynamics for multi-head SA without projection: $d\bm{X}/dt=\sum_{h=1}^{H} SA_h(\bm{X})$. An energy function
    \begin{align}
       E_{\text{multi}}(\bm{X}) &= - \sum_{h}\sum_{i,j} \exp\left( \beta \bm{X}_{[i,:]}^\top \bm{W}^{Q}_{h} \bm{W}^{K\top}_{h} \bm{X}_{[j,:]}\right)
\end{align}
is monotonically decreasing as $d E_{\text{multi}}(\bm{X})/dt \leq 0$ under the condition
\begin{align}
    \bm{W}^{V}_{h} &= (\bm{W}^{K}_{h}\bm{W}^{Q\top}_{h} + \bm{W}^{Q}_{h} \bm{W}^{K\top}_{h}) / 2, \ \
     \bm{W}^{Q}_{h} \bm{W}^{K\top}_{h} = \bm{U}_{1,h} \bm{U}_{2,h}^{\top},
\end{align}
where $\bm{U}_{1(2),h} \in \mathbb{R}^{D \times D/(2H)}$ ($h\in [1, H]$) satisfies the orthogonality condition $\bm{U}_{k,h}^{\top} \bm{U}_{k',h'} = \delta_{hh'} \delta_{kk'}  \bm{I}_{D/(2H)}$.
\end{em}

\begin{proof}
Let $\Delta_{h} = \operatorname{softmax}(\beta \bm{X}\bm{W}^{Q}_{h} \bm{W}^{K\top}_{h}\bm{X}^{\top})\bm{X}\bm{W}^{V}_{h})$ and let $\bm{A}_{h} = \bm{W}^{Q}_{h} \bm{W}^{K\top}_{h}$. The first-order derivative of $E_{\text{multi}}(\bm{X})$ with respect to $\bm{X}_{[i,:]}$ is, similarly to the single-head case, given by:
\begin{align}
    \frac{\partial E_{\text{multi}}(\bm{X})}{\partial \bm{X}_{[i,:]}}
    &= - \sum_{h} \sum_{j} \exp\left( \beta \bm{X}_{[i,:]}^\top \bm{A}_{h} \bm{X}_{[j,:]}\right)(\bm{A}_{h}^{\top} + \bm{A}_{h})\bm{X}_{[j,:]}
\end{align}
Under the given condition on the weights, similar to the single-head case, we have: 
\begin{align}
    \Delta_{h [i,:]} &= \sum_{j} \frac{\exp\left( \beta \bm{X}_{[i,:]}^\top \bm{A}_{h} \bm{X}_{[j,:]}\right)}{Z_{h, i}}(\bm{A}_{h}^{\top} + \bm{A}_{h})\bm{X}_{[j,:]} / 2
\end{align}
where $Z_{h,i} = \sum_{j'} \exp\left( \beta \bm{X}_{[i,:]}^\top \bm{A}_{h} \bm{X}_{[j',:]}\right)$ is the normalization term.
Then, we have,
\begin{align}
 \frac{d E_{\text{multi}}(\bm{X})}{dt} &= \frac{d E_{\text{multi}}(\bm{X})}{d\bm{X}} \cdot \frac{d\bm{X}}{dt} \\
   &= -  \sum_{i} \left(\frac{\partial E_{\text{multi}}(\bm{X})}{\partial \bm{X}_{[i,:]}}\cdot \sum_{h} \Delta_{h[i,:]} \right) \\
   &= -  \sum_{i} \left(\sum_{h} \sum_{j} \exp\left( \beta \bm{X}_{[i,:]}^\top \bm{A}_{h} \bm{X}_{[j,:]}\right)(\bm{A}_{h}^{\top} + \bm{A}_{h})\bm{X}_{[j,:]}\cdot \sum_{h} \Delta_{h[i,:]} \right) \\
   &= -  2\sum_{i} \left(\sum_{h} Z_{h,i} \Delta_{h[i,:]}\cdot \sum_{h} \Delta_{h[i,:]} \right) \\
   &= -  2\sum_{i}\sum_{h} Z_{h,i} \|\Delta_{h[i,:]}\|^{2} \\
   &\le 0,
\end{align}
where we use the fact that for $h\neq h'$,
\begin{align}
&\bm{A}_{h}^{\top}\bm{A}_{h'} = \bm{U}_{2,h}\bm{U}_{1,h}^{\top}\bm{U}_{1,h'}\bm{U}_{2,h'}^{\top} = \bm{O}, \quad \bm{A}_{h}\bm{A}_{h}^{\top}= \bm{O}, \\
&\bm{A}_{h}\bm{A}_{h'} = \bm{U}_{1,h}\bm{U}_{2,h}^{\top}\bm{U}_{1,h'}\bm{U}_{2,h'}^{\top} = \bm{O},
\end{align}
and thus
\begin{align}
\Delta_{h[i,:]}\cdot \Delta_{h'[i,:]}
&= \Delta_{h[i,:]}^{\top}\Delta_{h'[i,:]} \\
&= \left(\sum_{j} \frac{\exp\left( \beta \bm{X}_{[i,:]}^\top \bm{A}_{h} \bm{X}_{[j,:]}\right)}{Z_{h, i}}\bm{X}_{[j,:]}\right)^\top \\  
&\quad \quad (\bm{A}_{h}^{\top} + \bm{A}_{h}) (\bm{A}_{h'} + \bm{A}_{h'}^{\top})\sum_{j} \frac{\exp\left( \beta \bm{X}_{[i,:]}^\top \bm{A}_{h'} \bm{X}_{[j,:]}\right)}{Z_{h', i}}\bm{X}_{[j,:]} / 4 \\
&= 0.
\end{align}
\end{proof}

\subsection{Proof of Proposition~\ref{prop:normalize}}
\label{proof:normalize}
\begin{em}
{\bf \Cref{prop:normalize} is restated.}
Suppose that, in the update of ItrSA~\eqref{eq:itrsa}, the input to the normalization layer satisfies $\|\bm{X}_{[i,:]} + \eta \Delta \bm{X}_{[i,:]}\| \ge R$ for all $i \in [1, S]$. Then, the spectral norm of the Jacobian satisfies the upper bound
\begin{align}
\left\| \frac{\partial \operatorname{RMSNorm}(\bm{X} + \eta \Delta \bm{X})}{\partial \bm{X}} \right\|_2 
&\le \frac{\max_j(|\gamma_j|)}{R} \left(1 + \eta \left\|\bm{J}_{\text{MSA}}(\bm{X})\right\|_2 \right),
\end{align}
where $\bm{J}_{\text{MSA}}(\bm{X}) \coloneqq \partial \operatorname{MSA}(\bm{X}) / \partial \bm{X}$ denotes the Jacobian of MSA.
\end{em}

\begin{proof}
First, for any vector $a \in \mathbb{R}^{D}$, the eigenvalues of the matrix $I_D - \frac{aa^\top}{\|a\|^2}$ are $1$ (with multiplicity $D-1$) and $0$ (with multiplicity $1$). Hence,
\begin{align}
\left\| I_D - \frac{aa^\top}{\|a\|^2} \right\|_2 &= 1.
\end{align}

Using \Cref{lem:rmsnorm}, we have
\begin{align}
\left\| \frac{\partial \operatorname{RMSNorm}(\bm{Y})}{\partial \bm{Y}} \right\|_2 
&= \left\| \operatorname{blockdiag} \left(\left\{\frac{1}{\|\bm{Y}_{[i,:]}\|} \operatorname{diag}(\bm{\gamma}) \left( I_D - \frac{\bm{Y}_{[i,:]}\bm{Y}_{[i,:]}^\top}{\|\bm{Y}_{[i,:]}\|^2} \right)\right\}_{i}\right) \right\|_2 \\
&= \max_i \left\| \frac{1}{\|\bm{Y}_{[i,:]}\|} \operatorname{diag}(\bm{\gamma}) \left( I_D - \frac{\bm{Y}_{[i,:]}\bm{Y}_{[i,:]}^\top}{\|\bm{Y}_{[i,:]}\|^2} \right) \right\|_2 \\
&\le \frac{\max_j |\gamma_j|}{R} .
\end{align}
Therefore, setting $\bm{Y}= \bm{X} + \eta \Delta \bm{X} = \bm{X} + \eta \left(\bm{C} + \operatorname{MSA}(\bm{X})\right)$, we have
\begin{align}
\left\| \frac{\partial \operatorname{RMSNorm}(\bm{X} + \eta \Delta \bm{X})}{\partial \bm{X}} \right\|_2 
&= \left\| \frac{\partial \operatorname{RMSNorm}(\bm{Y})}{\partial \bm{Y}} \frac{\partial \bm{Y}}{\partial \bm{X}}\right\|_2 \\
&\le \left\| \frac{\partial \operatorname{RMSNorm}(\bm{Y})}{\partial \bm{Y}}\right\|_2  \left\|\frac{\partial \bm{Y}}{\partial \bm{X}}\right\|_2 \\
&\le \frac{\max_j |\gamma_j|}{R} \left\|\bm{I}_{SD} + \eta \bm{J}_{\text{MSA}} \right\|_2 \\
&\le \frac{\max_j(|\gamma_j|)}{R} \left(1 + \eta \left\|\bm{J}_{\text{MSA}}(\bm{X})\right\|_2 \right).
\end{align}
\end{proof}

Here, we can also show that, in the limit $\eta \to \infty$, the Jacobian norm remains $O(1)$.
Define, for each $i$, the projection
\begin{align}
\bm{P}_i \coloneqq \bm{I}_D - \frac{\bm{Y}_{[i,:]}\bm{Y}_{[i,:]}^{\top}}{\|\bm{Y}_{[i,:]}\|_{2}^2},
\end{align}
so that $\| \bm{P}_i\|_{2} = 1$.  
Let $\bm{D} \coloneqq \operatorname{diag}(\bm{\gamma})$, and define the block factors
\begin{align}
\bm{A}_i(\eta) \coloneqq \frac{1}{\| \bm{X}_{[i,:]}+\eta\,\Delta\bm{X}_{[i,:]}\|_{2}}\,\bm{D}\,\bm{P}_i,
\qquad
\bm{A}(\eta) \coloneqq \operatorname{blockdiag}\left(\{\bm{A}_i(\eta)\}_{i=1}^N\right).
\end{align}
By the triangle inequality, for each $i$,
\begin{align}
\| \bm{X}_{[i,:]}+\eta\,\Delta\bm{X}_{[i,:]} \|_{2}
\;\ge\; \eta\,\| \Delta\bm{X}_{[i,:]}\|_{2} - \| \bm{X}_{[i,:]}\|_{2}.
\end{align}

Recall that $\bm{X}$ is the output of the previous layer, so that
\begin{align}
\bm{X} = \operatorname{RMSNorm}(\bm{Z})
\end{align}
for some $\bm{Z}\in \mathbb{R}^{S\times D}$. Therefore,
\begin{align}
\|\bm{X}_{[i,:]}\|_{2}
&= \|\operatorname{RMSNorm}(\bm{Z})_{[i,:]}\|_{2} \\
&= \Big\|\operatorname{diag}(\bm{\gamma})\,\frac{\bm{Z}_{[i,:]}}{\|\bm{Z}_{[i,:]}\|_{2}}\Big\|_{2} \\
&= \frac{\sqrt{\sum_{j=1}^{D} \gamma_{j}^2 Z_{[i,j]}^2}}{\|\bm{Z}_{[i,:]}\|_{2}} \\
&\leq \max_{j} |\gamma_{j}|\,\frac{\sqrt{\sum_{j=1}^{D} Z_{[i,j]}^2}}{\|\bm{Z}_{[i,:]}\|_{2}} \\
&= \max_{j} |\gamma_{j}|
\end{align}
for each $i=1,\ldots,S$.

Assume that $\min_i \|\Delta\bm{X}_{[i,:]}\|_{2} \ge \varepsilon$ for some constant $\varepsilon > 0$.  
Then, for a sufficiently large $\eta$ satisfying $\eta \ge \max_{j} |\gamma_{j}| /\varepsilon$, we have
\begin{align}
\eta \ge 2\,\frac{\|\bm{X}_{[i,:]}\|_{2}}{\|\Delta\bm{X}_{[i,:]}\|_{2}}.
\end{align}
Hence,
\begin{align}
\| \bm{X}_{[i,:]}+\eta\,\Delta\bm{X}_{[i,:]} \|_{2}
&\ge  \eta\,\|\Delta\bm{X}_{[i,:]} \|_{2} - \|\bm{X}_{[i,:]}\|_{2} \\
&\ge \tfrac{\eta}{2}\,\|\Delta\bm{X}_{[i,:]}\|_{2}.
\end{align}
Using submultiplicativity and $\|\bm{P}_i\|_{2}=1$, we obtain
\begin{align}
\| \bm{A}_i(\eta) \|_{2}
&= \frac{\| \bm{D}\,\bm{P}_i\|_{2}}{\| \bm{X}_{[i,:]}+\eta\,\Delta\bm{X}_{[i,:]} \|_{2}} \\
&\le \frac{\| \bm{D}\|_{2}}{\| \bm{X}_{[i,:]}+\eta\,\Delta\bm{X}_{[i,:]} \|_{2}} \\
&\le \frac{2\,\| \bm{D}\|_{2}}{\eta\,\|\Delta \bm{X}_{[i,:]}\|_{2}},
\end{align}
for all $i=1,\ldots,S$.

For a block-diagonal matrix, the operator norm equals the maximum block norm; thus
\begin{align}
\| \bm{A}(\eta) \|_{2}
&= \max_{i} \| \bm{A}_i(\eta)\|_{2} \\
&\le \max_{i} \frac{2\,\| \bm{D}\|_{2}}{\eta\,\|\Delta \bm{X}_{[i,:]}\|_{2}} \\
&= \frac{2\,\| \bm{D}\|_{2}}{\eta\,\min_{i}\|\Delta \bm{X}_{[i,:]}\|_{2}} \\
&\le \frac{2\,\| \bm{D}\|_{2}}{\eta\,\varepsilon}.
\end{align}
Therefore, we have
\begin{align}
\left\| \bm{A}(\eta)\,(\bm{I}_{ND}+\eta\,\bm{J}_{\mathrm{MSA}}) \right\|_{2} 
&\le \|\bm{A}(\eta)\|_{2}\left(\|\bm{I}_{ND}\|_{2} + \eta\,\|\bm{J}_{\mathrm{MSA}}\|_{2}\right) \\
&\le \frac{2\,\| \bm{D}\|_{2}}{\varepsilon}
\left(\|\bm{J}_{\mathrm{MSA}}\|_{2}+\frac{1}{\eta}\right).
\end{align}
Since Eq.~\eqref{eq:sa_lip} makes a constant upper-bound of  $\|\bm{J}_{\mathrm{MSA}}\|_{2}$, 
for $\eta \to \infty$, we obtain
\begin{align}
\left\|
\frac{\partial\,\operatorname{RMSNorm}(\bm{X}+\eta\,\Delta\bm{X})}{\partial \bm{X}}
\right\|_{2}
= O(1).
\end{align}

\subsection{Derivation of the eigenvalue bound in oscillatory cases (Section~\ref{sec:osc})}
\label{sec:eig_osc}
We show that all eigenvalues $\lambda_j$ of the Jacobian
\begin{align}
\bm{J}(x) = \left(\bm{I}_{D} - \frac{\bm{y}\bm{y}^\top}{\|\bm{y}\|^2}\right)\frac{1}{\|\bm{y}\|}(\bm{I}_{D} + \eta \bm{\Omega}),
\end{align}
satisfy $|\lambda_j| \le 1$, where $\bm{y} = (\bm{I}_{D} + \eta \bm{\Omega})\bm{x}$.

We begin by computing the norm of $\bm{y}$:
\begin{align}
\|\bm{y}\|^2
&= \bm{x}^\top(\bm{I}_{D} + \eta \bm{\Omega}^\top)(\bm{I}_{D} + \eta \bm{\Omega})\bm{x} \\
&= \bm{x}^\top \bm{x} + \eta \bm{x}^\top \bm{\Omega} \bm{x} + \eta \bm{x}^\top \bm{\Omega}^\top \bm{x} + \eta^2 \bm{x}^\top \bm{\Omega}^\top \bm{\Omega} \bm{x} \\
&= 1 + \eta^2 \|\bm{\Omega} \bm{x}\|^2, 
\end{align}
where we used the fact that for an antisymmetric matrix $\bm{\Omega}$,  $\bm{x}^\top \bm{\Omega} \bm{x} = 0$.
Note also that an antisymmetric matrix has eigenvalues of the form $\pm i \omega_j$, where $\omega_j \geq 0$ ($j=1,2,\dots$).
For simplicity, assume all eigenvalues have identical magnitude $\omega_j = \omega$. Then, we have
\begin{equation}
\|\bm{y}\|^2 = 1 + \eta^2 \omega^2.
\end{equation}

We also use the facts that
\begin{align}
\left\|\bm{I}_{D} - \frac{\bm{y}\bm{y}^\top}{\|\bm{y}\|^2}\right\|_{2} \le 1
\end{align}
and
\begin{align}
\left\|\bm{I}_{D} + \eta \bm{\Omega} \right\|_{2} &= |1 \pm i \eta \omega| = \sqrt{1 + \eta^2 \omega^2}.
\end{align}

Combining these, we obtain the following bound on the spectral norm of $\bm{J}(\bm{x})$:
\begin{align}
\|\bm{J}(\bm{x})\|_{2} &= \left\|\left(\bm{I}_{D} - \frac{\bm{y} \bm{y}^\top}{\|\bm{y}\|^2} \right)\frac{1}{\|\bm{y}\|}(\bm{I}_{D} + \eta \bm{\Omega})\right\|_{2} \\
&\le \frac{1}{\|\bm{y}\|} \left\| \bm{I}_{D} + \eta \bm{\Omega} \right\|_{2} \\
&\le \frac{\sqrt{1 + \eta^2 \omega^2}}{\sqrt{1 + \eta^2 \omega^2}} = 1.
\end{align}

This implies that all eigenvalues of $\bm{J}(\bm{x})$ satisfy $|\lambda_j| \le 1$.

\section{Experimental details}
\label{appendix:experiment}

\subsection{Experimental setup}
We solved Sudoku task, which is a puzzle played on a $9 \times 9$ grid, where some of the cells are pre-filled with digits from $1$ to $9$, and the remaining cells are left blank. The objective is to fill in the blank cells such that each 1) row, 2) column, and 3) $3 \times 3$ subgrid contains each digit exactly once.

In our experiments, we used two Sudoku datasets: the SATNet~\citep{wang2019satnet} and RRN dataset~\citep{palm2018recurrent}. The key differences between the two are that the RRN dataset is more difficult (with only $17$–$34$ given digits compared to $31$–$42$ in SATNet) and larger in size ($198$k samples vs. $10$k samples). Following~\citet{miyato2024artificial}, we used the SATNet dataset for training as in-distribution (ID) data and the RRN dataset as out-of-distribution (OOD) data. This setup allows us to evaluate the ability of models to generalize to more challenging settings.

We primarily followed~\citet{miyato2024artificial} and used their official implementation\footnote{\url{https://github.com/autonomousvision/akorn}}, setting the dimension of oscillators of AKOrN to $N=8$. The readout module of AKOrN (described in~\Cref{app:prelim}) was also incorporated into our ItrSA model. 
We used the Adam optimizer~\citep{kingma2014adam} and trained for $100$ epochs with batch size $100$. For all settings, we tuned the learning rate over $\{1 \times 10^{-6}, 5 \times 10^{-6}, \dotsc, 1 \times 10^{-3}\}$ and, for regularization methods in~\Cref{fig:reg}, the parameter $\lambda$ over $\{1 \times 10^{-8}, 1 \times 10^{-7}, \dotsc, 1 \times 10^{-1}\}$, selecting values based on OOD accuracy at iteration $T=16$.
All experiments were conducted on NVIDIA H200 GPUs, and we run experiments with 5 different random seeds.

We also conducted experiments on the CIFAR-10 dataset~\citep{cifar10}. See~\cref{tab:training_config} for training and model configurations.

\begin{table}[htb]
\centering
\caption{Training and model configurations.}
\label{tab:training_config}
\begin{tabular}{lcc}
\toprule
\textbf{Parameter} & \textbf{Sudoku} & \textbf{CIFAR-10} \\
\midrule
Hidden dimension $D$       & 512   & 384   \\
Number of heads $H$        & 8     & 8     \\
Initial value of $\eta$    & 1.0   & 1.0   \\
Batch size                 & 100   & 128   \\
Number of epochs           & 100   & 200   \\
\bottomrule
\end{tabular}
\end{table}

\begin{table}[htb]
\centering
\caption{Regularization coefficients used in~\Cref{fig:reg}.}
\label{tab:reg_coeff}
\begin{tabular}{lc}
\toprule
\textbf{Method} & \textbf{Value} \\
\midrule
E-single       & 1e-6    \\
E-multi       & 1e-4    \\
Spec (ItrSA)       & 1e-4    \\
Spec (AKOrN)       & 1e-5    \\
\bottomrule
\end{tabular}
\end{table}

\subsection{Single-head generalized symmetric SA}
\label{sec:e-single}
For single-head generalized symmetric SA, we define
\begin{align}
R_{\text{E-single}} \coloneqq \left\| \bm{W}^{V}_{1}\bm{W}^{O}_{1} - (\bm{W}^{V}_{1}\bm{W}^{O}_{1})^\top \right\|_{F}^{2},
\end{align}
under the condition that $H = 1$.
If $R_{\text{E-single}} = 0$, setting $\bm{W}^{V} = \bm{W}_{1}^{V}\bm{W}_{1}^{O}$ satisfies the condition on $\bm{W}^{V}$ described in~\Cref{prop:single}. 

\subsection{Lyapunov exponent}
\label{appendix:lyapunov}
The Lyapunov exponent quantifies the exponential rate at which nearby trajectories in a dynamical system diverge. For a discrete-time system $\bm{x}^{(t+1)} = \bm{f}(\bm{x}^{(t)})$, the Lyapunov spectrum $\{\lambda_i\}$ is defined as:
\begin{align}
    \lambda_{i} = \lim_{T \to \infty} \frac{1}{2T} \log |\alpha^{(T)}_{i}|,
\end{align}
where $\alpha^{(T)}_{i}$ is the $i$-th eigenvalue of the positive semi-definite matrix
\begin{align}
    \Lambda^{(T)} = \left(\frac{d \bm{f}^T(\bm{x}^{(0)})}{d \bm{x}^{(0)}}\right)^\top \frac{d \bm{f}^T(\bm{x}^{(0)})}{d \bm{x}^{(0)}},
\end{align}
and $\bm{f}^T$ denotes the $T$-fold composition of the function $\bm{f}$. The \emph{maximum Lyapunov exponent} is then defined as
\begin{align}
    \lambda_{\max} \coloneqq \max_i \lambda_i.
\end{align}
To mitigate numerical sensitivity, we also use \emph{mean Lyapunov exponent}
\begin{align}
    \lambda_{\text{mean}} \coloneqq \frac{1}{M}\sum_{i=1}^{M} \lambda_i,
\end{align}
where $M$ denotes the number of Lyapunov exponents.

In our experiments, we approximated the Lyapunov spectrum using a finite time horizon of $T = 16$ on a randomly selected sample. For models without normalization and symmetric SA, we trained them for only one epoch, as full training was not feasible due to instability. To evaluate how the Lyapunov exponent varies, we adjusted the input scaling of $\bm{X}$, the step size $\eta$, and the norms of the value projection weights, $\|\bm{W}^{V}_{h}\|$ and $\|\bm{W}^{O}_{h}\|$, in the SA update of $\bm{X}$.

\subsection{Details of other figures}
\label{sec:ref}
For~\Cref{fig:sudoku_complex_plane}, we computed the eigenvalues of the Jacobian matrix at $T=16$ on a randomly selected sample from the Sudoku dataset. For the model with normalization, we used the fully trained model. For models without normalization, we followed the same setup as in the Lyapunov experiments and used models trained for only one epoch.

For the computation of the SA's Jacobian in~\Cref{fig:tokens_vs_jac}, we used the CCDV arXiv summarization dataset~\citep{cohan-etal-2018-discourse}, as it provides text data suitable for varying the number of tokens. We used an initialized SA and computed the Jacobian and SA followed by normalization over $500$ randomly selected samples. The norm of tokens was set to $R=100$ and their dimensions to $D=256$.

\subsection{Additional results}
\label{sec_B5}
\subsubsection{Effects of model structure and regularization}



\textbf{Number of attention heads.}
To further examine the effect of normalization, we computed Lyapunov exponents while varying the number of attention heads $H$.
Because the maximum Lyapunov exponent is numerically sensitive, we report the \emph{mean} Lyapunov exponent instead.
The results in~\Cref{fig:exponent_head} show that models with few heads ($H\in\{1,2\}$) perform poorly, whereas $H=8$ achieve the highest accuracy.
Moreover, the mean Lyapunov exponent is positively correlated with accuracy, suggesting that more dynamic states are associated with better performance.

\begin{figure}[htb]
  \centering 
  \begin{minipage}{0.45\columnwidth}
 \includegraphics[height=5.0cm]{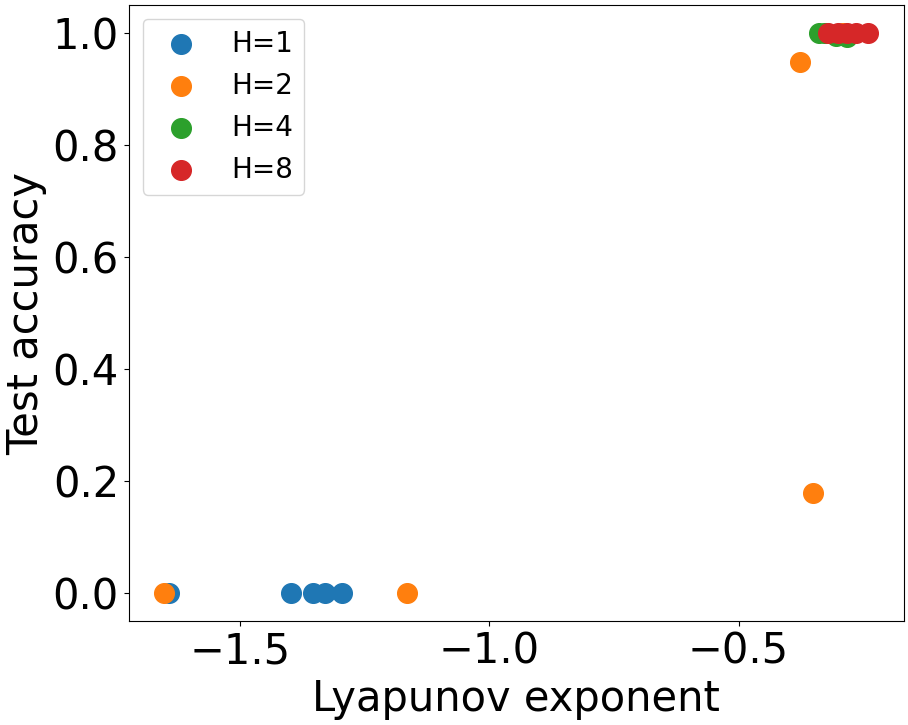}
    \subcaption{ID}
    \label{fig:exponent_head_id}
    \end{minipage}
    \hfill
    \begin{minipage}{0.45\columnwidth}
 \includegraphics[height=5.0cm]{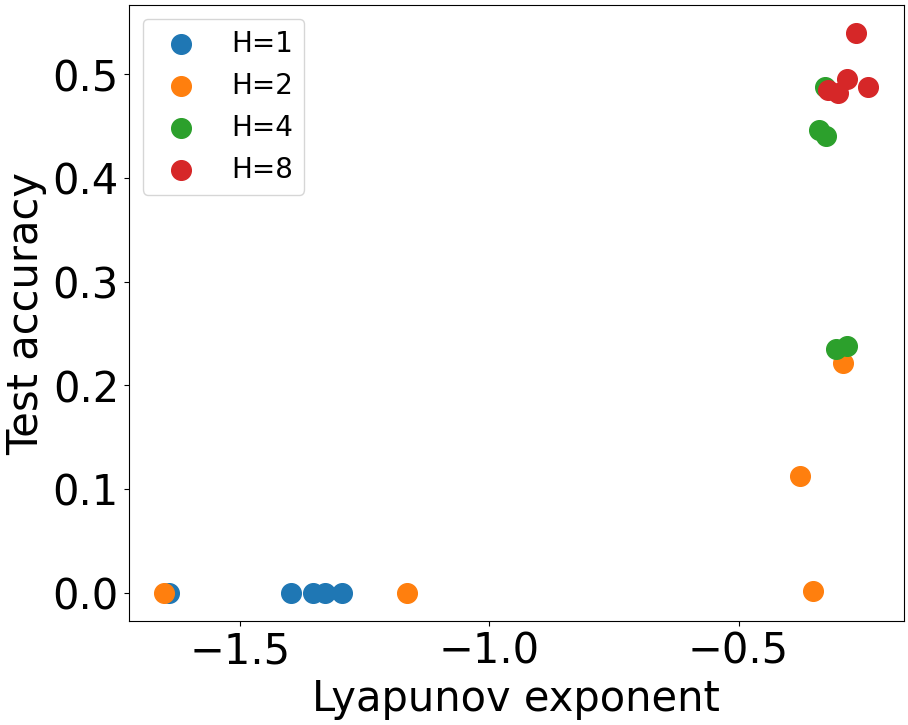}
    \subcaption{OOD}
    \label{fig:exponent_head_ood}
    \end{minipage}
    \caption{Effect of the number of attention heads $H$ on mean Lyapunov exponent and accuracy in the Sudoku dataset.}
    \label{fig:exponent_head}
\end{figure}

\paragraph{$\bm{\gamma}$ in $\operatorname{RMSNorm}$.}
\Cref{tab:gamma_rmsnorm} presents the values of the $\bm{\gamma}$ parameter learned by ItrSA. The results indicate that the trained models exhibit small $\bm{\gamma}$ values, with $\max_{j} |\gamma_j| < 1$.

\begin{table}[htbp]
\centering
\caption{$\bm{\gamma}$ in $\operatorname{RMSNorm}$ with different $N$.}
\begin{tabular}{ccc}
\toprule
$N$ & $\|\bm{\gamma}\|$ & $\max_{j}|\gamma_{j}|$ \\
\midrule
$4$   & $0.229 \pm 0.000$ & $0.229 \pm 0.000$ \\
$8$   & $0.031 \pm 0.002$ & $0.052 \pm 0.000$ \\
$16$  & $0.098 \pm 0.006$ & $0.098 \pm 0.006$ \\
$32$  & $0.348 \pm 0.000$ & $0.348 \pm 0.000$ \\
$64$  & $0.489 \pm 0.001$ & $0.489 \pm 0.001$ \\
$128$ & $0.841 \pm 0.000$ & $0.841 \pm 0.000$ \\
$256$ & $0.738 \pm 0.000$ & $0.738 \pm 0.000$ \\
$512$ & $0.811 \pm 0.000$ & $0.811 \pm 0.000$ \\
\bottomrule
\end{tabular}
\label{tab:gamma_rmsnorm}
\end{table}

\paragraph{Further results on regularization.}
In \Cref{fig:exponent_reg}, we evaluate the effect of the proposed multi-head energy (\Cref{prop:multi}) on both the mean Lyapunov exponent and accuracy. As the regularization strength $\lambda$ increases, both metrics consistently decrease. Even at $\lambda=0$, accuracy is lower than the original model due to the orthogonality constraint. The ``Hard'' constraint yields the lowest accuracy and the smallest Lyapunov exponent. Overall, these results indicate that stronger regularization suppresses the Lyapunov exponent but also degrades accuracy, suggesting that while multi-head energy regularization encourages more convergent dynamics, it does not necessarily yield better performance.

\begin{figure}[htb]
  \centering 
  \begin{minipage}{0.45\columnwidth}
 \includegraphics[height=5.0cm]{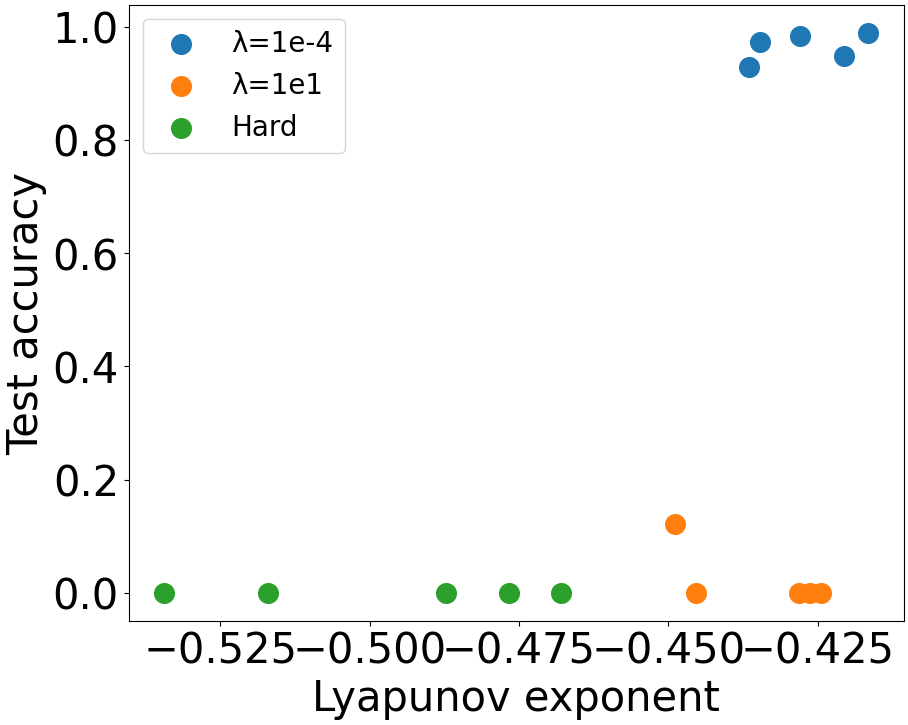}
    \subcaption{ID}
    \label{fig:exponent_reg_id}
    \end{minipage}
    \hfill
    \begin{minipage}{0.45\columnwidth}
 \includegraphics[height=5.0cm]{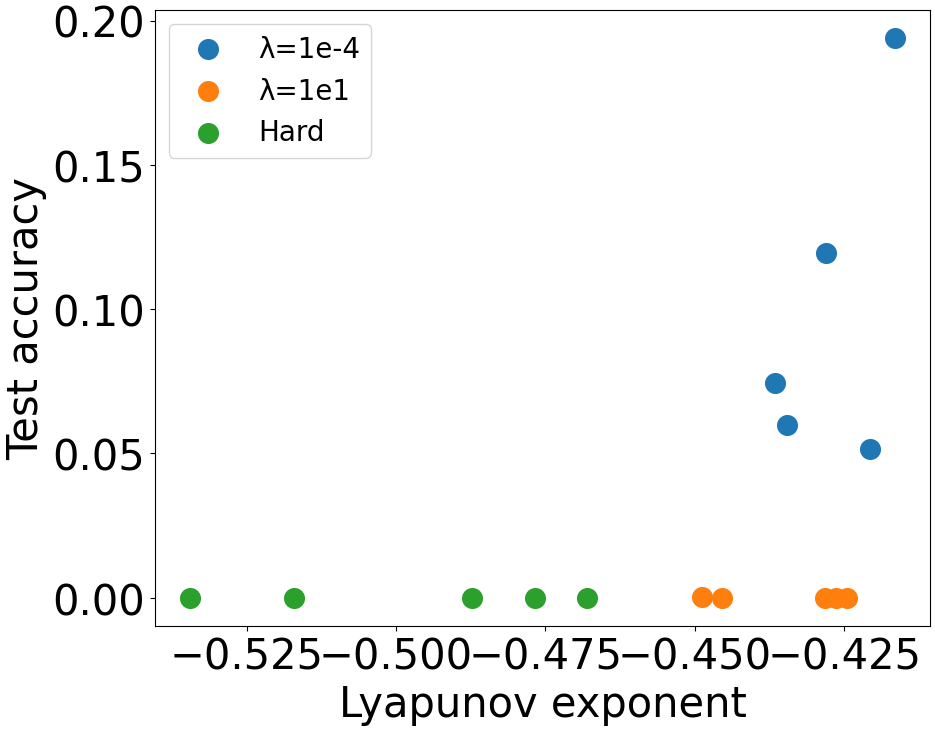}
    \subcaption{OOD}
    \label{fig:exponent_reg_ood}
    \end{minipage}
    \caption{Effect of multi-head energy on the mean Lyapunov exponent and accuracy for the Sudoku dataset. $\lambda$ denotes the regularization coefficient of Eq.~\eqref{eq:reg_emulti}, and ``Hard'' indicates the hard constraint in \Cref{prop:multi}.}
     \label{fig:exponent_reg}
\end{figure}

In~\Cref{fig:sup_reg} we plot the case where we used $N=8$ as the oscillator dimension of AKOrN. AKOrN achieves the best performance and spectral regularization is also effective.
\begin{figure}[ht]
        \centering
        \includegraphics[width=0.9\columnwidth]{
                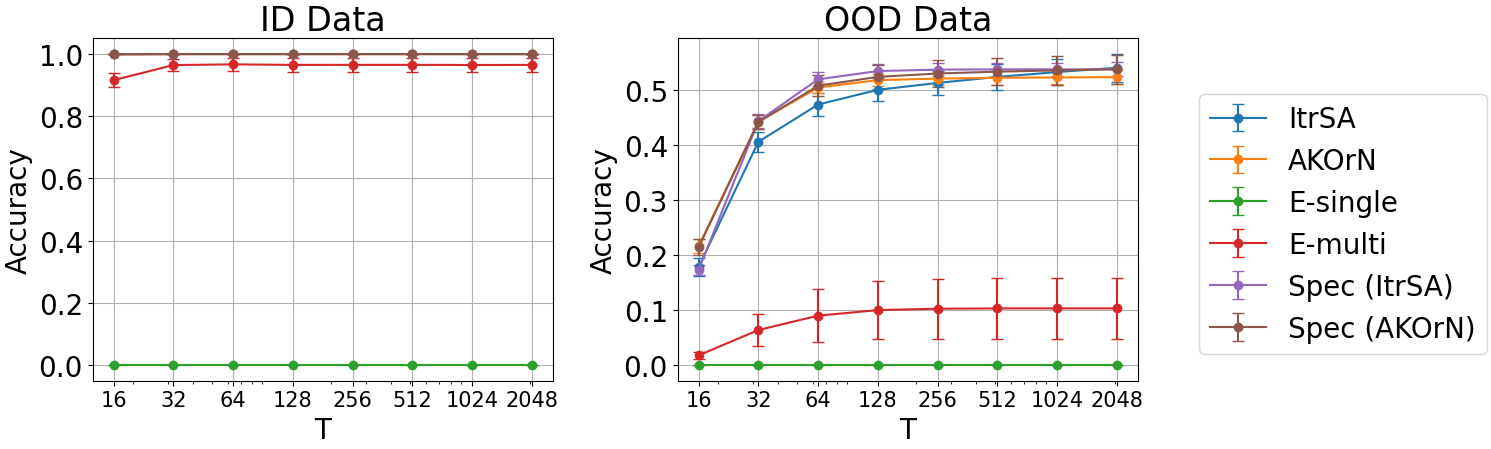
            }
        \caption{Energy-based regularization (``E-single'' and ``E-multi'') underperforms the original methods, while Jacobian spectral regularization (``Spec'') outperforms. We used $N=8$ for AKOrN.}
        \label{fig:sup_reg}
        
\end{figure}

\subsubsection{Distribution of Lyapunov exponents}

\Cref{fig:distribution_lyapunov} shows the distibution of the Lyapunov exponent.

\begin{figure}[htb]
        \centering
\begin{minipage}{0.49\linewidth}
    \centering
    \includegraphics[width=\linewidth]{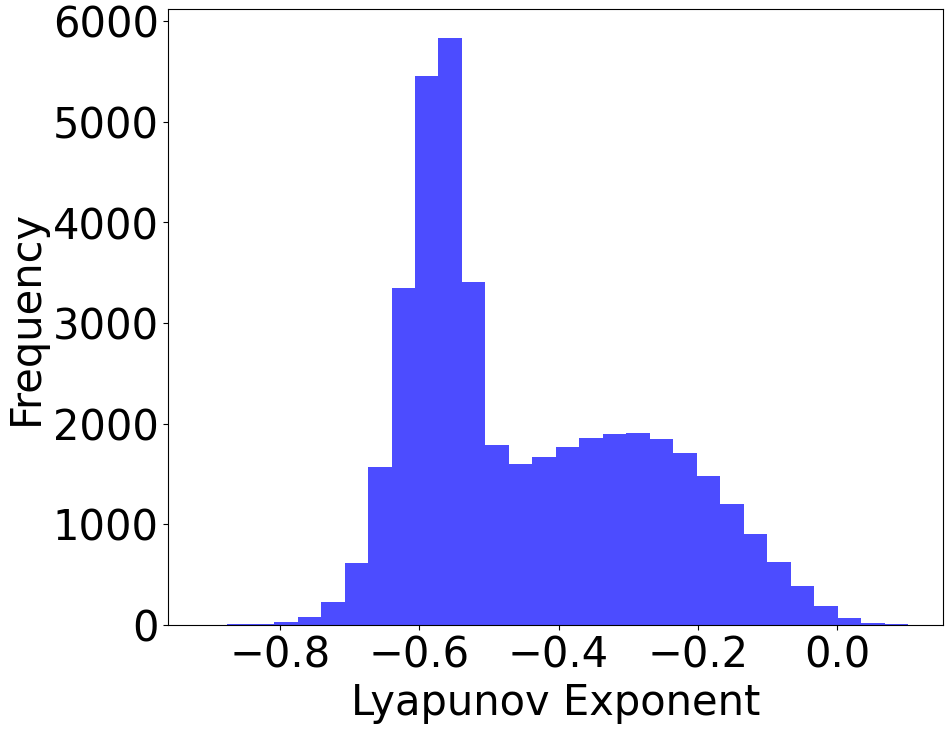}
    \subcaption{AKOrN}
\end{minipage}
\hfill
\begin{minipage}{0.49\linewidth}
    \centering
    \includegraphics[width=\linewidth]{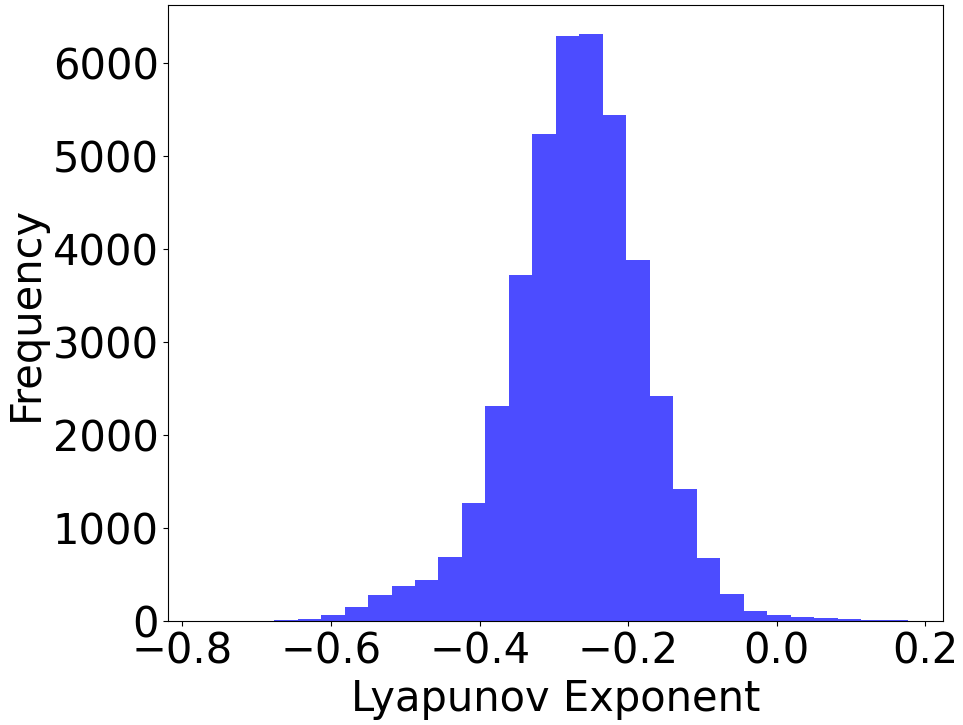}
    \subcaption{ItrSA}
\end{minipage}
\caption{Distribution of the Lyapunov exponent.}
\label{fig:distribution_lyapunov}
\end{figure}

\subsubsection{Lyapunov exponents in language modeling tasks}

To evaluate in a more realistic scenario, we conducted language modeling experiments on the BabyLM Challenge dataset (2023, 10M version)~\citep{martinez-etal-2023-climb}, comparing our ItrSA model with GPT-2~\citep{radford2019language}. We trained for 30 epochs using the AdamW optimizer. As shown in~\Cref{tab:lm-lyapunov}, the maximum Lyapunov exponent (MLE) for ItrSA is slightly positive, which is consistent with our other tasks and suggests mildly chaotic dynamics. We also show the loss values in the case of GPT-2 as a reference which confirms that the performance of our ItrSA can become comparable to that of GPT-2.

\begin{table}[htbp]
\centering
\caption{Language modeling results. MLE denotes the maximum Lyapunov exponent.}
\label{tab:lm-lyapunov}
\begin{tabular}{lcccc}
\toprule
Model & \# Parameters & Training loss & Validation loss & MLE \\
\midrule
ItrSA & 96.2M & 4.31 & 5.67 & $0.196 \pm 0.016$ \\
GPT-2 & 124M  & 1.49 & 5.58    & - \\
\bottomrule
\end{tabular}
\end{table}

\subsubsection{Sensitivity to initial conditions}

In our study, we define ``criticality'' as the point at which the largest Lyapunov exponent takes zero. In our experiments, when the maximum Lyapunov exponent gets slightly positive, we observed behaviors consistent with widely accepted notions of chaos, particularly sensitivity to initial conditions as follows. 

Starting from an input, we ran the loop of the ItrSA model for $128$ steps to obtain $x_{t=0}$. We then repeated the run from a perturbed input $x_{t=0} + \epsilon$, where $\epsilon \sim N(0, 10^{-3})$ is added at $t=0$. From each trajectory, we sampled $300$ equally spaced points between $t=0$ and $t=10000$. The results are shown in~\Cref{fig:sensitivity}.

\Cref{fig:sensitivity_difference} plots the L1 distance between the two trajectories. In~\Cref{fig:sensitivity_half,fig:sensitivity_one_cluster}, we visualize a single coordinate of five trajectories. \Cref{fig:sensitivity_half} shows the complete trajectories. Since we observed that the trajectories rapidly oscillate between two clusters and the visualization was subtle, \Cref{fig:sensitivity_one_cluster} provides a zoomed-in view of one cluster (values $>0.1$).

Overall, the distance between the original and perturbed trajectories increases exponentially over time, demonstrating chaotic behavior in the system.

\begin{figure}[htb]
  \centering 
  \begin{minipage}{0.49\columnwidth}
 \includegraphics[width=\columnwidth]{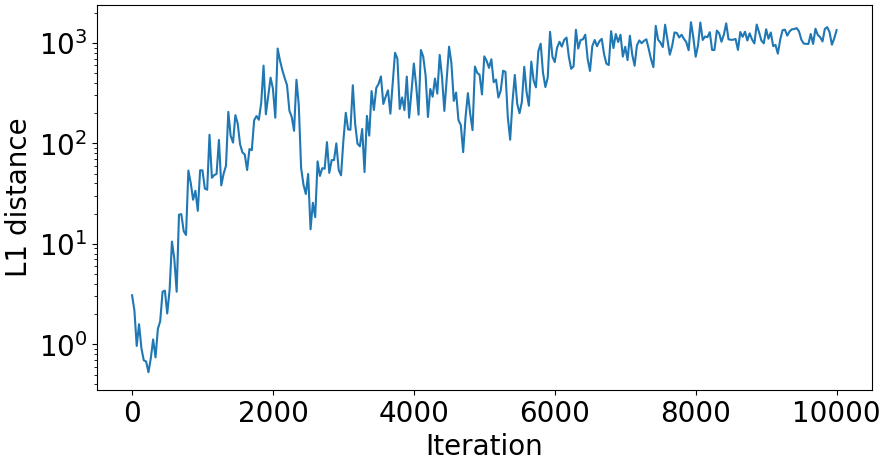}
    \subcaption{L1 distance between trajectories}
    \label{fig:sensitivity_difference}
    \end{minipage}
    \hfill
    \begin{minipage}{0.49\columnwidth}
 \includegraphics[width=\columnwidth]{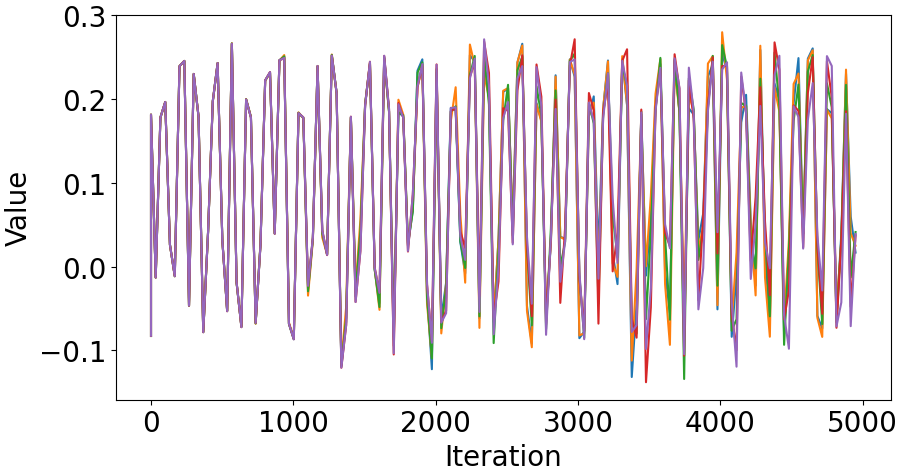}
 \subcaption{All sampled trajectories}
    \label{fig:sensitivity_half}
    \end{minipage}
     \begin{minipage}{0.99\columnwidth}
 \includegraphics[width=\columnwidth]{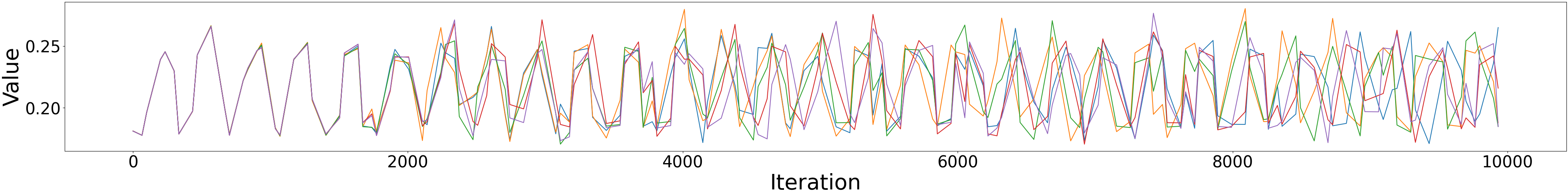}
 \subcaption{Trajectories forming a single cluster}
    \label{fig:sensitivity_one_cluster}
    \end{minipage}
    \caption{Sensitivity to initial conditions investigated using the ItrSA model.}
    \label{fig:sensitivity}
\end{figure}

\subsubsection{Jacobian-based interpretation of pseudo-energy}

We provide additional insights into the interpretation of pseudo-energy via the Jacobian discussed in~\Cref{sec:jac_energy}. Using Lemma A.3 from~\citet{noci2022signal}, the Jacobian of SA is expressed as
\begin{align}
\bm{J}=\frac{\partial \operatorname{SA}(\bm{X})}{\partial \bm{X}}
=  (\bm{I}_{S}\otimes \bm{W}^{V\top} \bm{X}^{\top}) \frac{\partial \bm{P}}{\partial \bm{X}} + \bm{P} \otimes \bm{W}^{V\top}, 
\label{eq:J_noci}
\end{align}
where $\bm{P} \coloneqq \softmax\left(\bm{X} \bm{W}^{Q} \bm{W}^{K\top} \bm{X}^{\top}/\sqrt{D_H}\right)$.

As our experimental result indicated, we observed that $\bm{J} \bm{x}$ aligns well with $\operatorname{vec}(\operatorname{MSA}(\bm{X}))$.
Since each head can be expressed as    
$\operatorname{vec}(\operatorname{SA}(\bm{X})) = (\bm{P} \otimes \bm{W}^{V\top})\bm{x}$, our observation implies that in the Jacobian (\ref{eq:J_noci}), the last term is dominant, that is, 
\begin{align}
\bm{J}_t \approx \sum_{h=1}^{H}  \bm{P}_h^{(t)} \otimes \bm{W}_h^{V\top}.
\label{app_Jt}
\end{align}
In other words, the derivative of the attention matrix $\bm{P}$ is small while that of the value matrix remains significant.
In addition, if the derivative of the attention matrix $\bm{P}$  is sufficiently small over the whole time, the Lipschitz continuity implies that $\bm{P}$  remains close to its initialization,  suggesting a time-independent Jacobian approximation:
\begin{align}
\bm{J}_{t} \approx \sum_{h=1}^{H} \bm{P}_h^{(0)} \otimes \bm{W}_h^{V\top}.
\label{app_J0}
\end{align}
Then, the pseudo-energy is approximated by the following quadratic form:
\begin{align}
E_{\text{pseudo}} \approx - \bm{x}_t^{\top} \left( \sum_{h=1}^{H} \bm{P}_h^{(0)} \otimes \bm{W}_h^{V\top} \right) \bm{x}_t.
\label{app_e0}
\end{align}


\Cref{fig:pv} empirically confirmed that both the contribution index and the pseudo-energy behave similarly
 even under these approximations. PV indicates the contribution index using the approximation (\ref{app_Jt}), and PV ($T=0$) uses (\ref{app_J0}).  The figure shows that the contribution index can be effectively explained using only the attention matrix at the initialization of inference. 
 The blue curve shows the approximated pseudo-energy (\ref{app_e0}) and works similarly to the original one.
Thus, our Jacobian-based analysis　interprets the pseudo-energy as quantifying exploration in eigenspaces corresponding to large eigenvalues of the attention matrix determined by the initial inference states.
\begin{figure}[htb]
  \centering 
 \includegraphics[width=0.49\columnwidth]{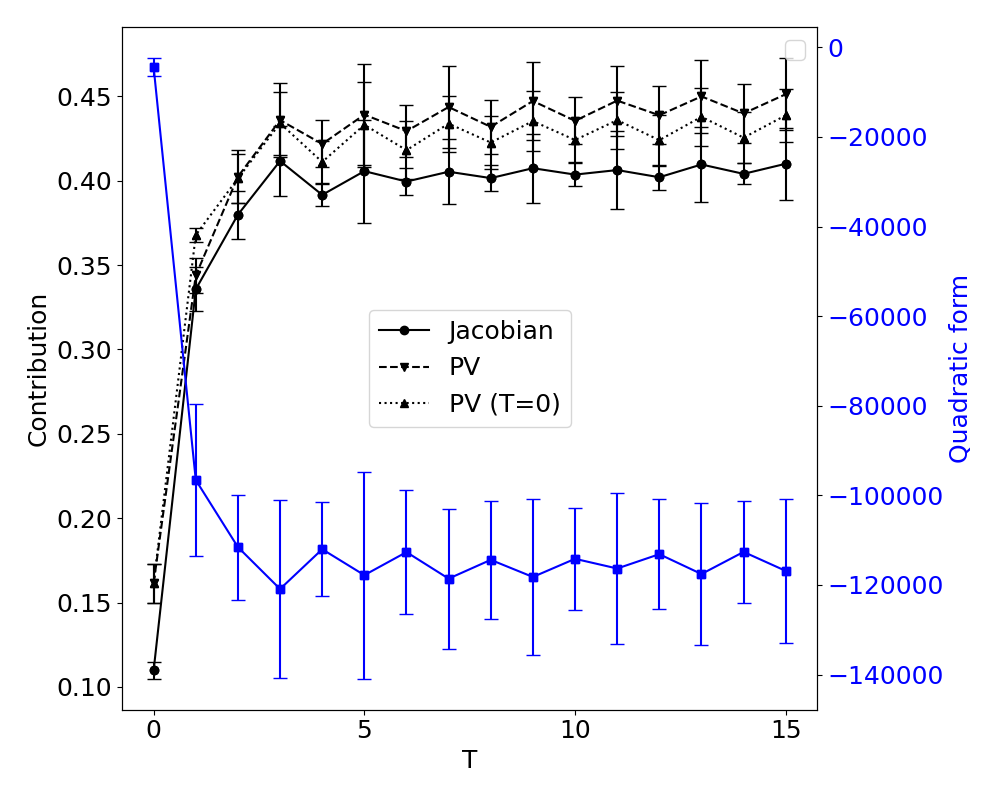}
    \caption{Contribution and the quandratic form.}
    \label{fig:pv}
\end{figure}

\subsubsection{CIFAR-10}
\label{sec:cifar10}
For the experiments on the CIFAR-10 dataset~\citep{cifar10}, we used the same architecture and setup as in the Sudoku experiments. We used the Adam optimizer and tuned the learning rate across $\{1 \times 10^{-6}, 5 \times 10^{-6}, \dotsc, 1 \times 10^{-3}\}$ based on the test accuracy at the iteration $T=16$. We trained models for 200 epochs and set the batch size $128$. We used $N=4$ as the dimension of oscillators of AKOrN.

\Cref{fig:cifar10} shows the Lyapunov exponent on the CIFAR-10 dataset. This result is in the same trend with that on the Sudoku dataset.

\begin{figure}[htb]
        \centering
\begin{minipage}{0.49\linewidth}
    \centering
    \includegraphics[width=\linewidth]{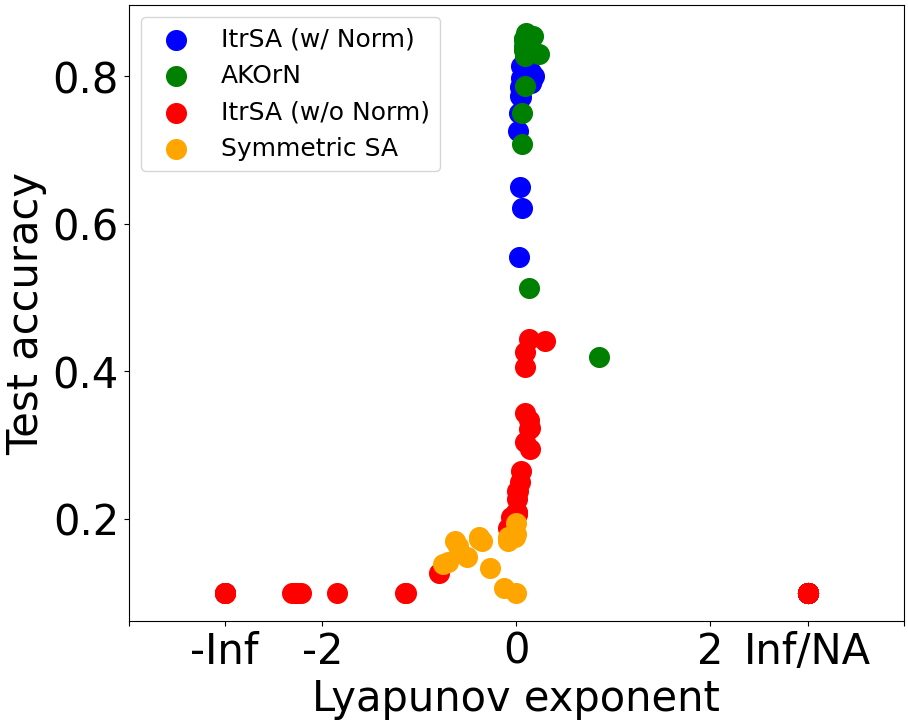}
    \subcaption{Maximum Lyapunov exponent vs. test accuracy}
\end{minipage}
\hfill
\begin{minipage}{0.49\linewidth}
    \centering
    \includegraphics[width=\linewidth]{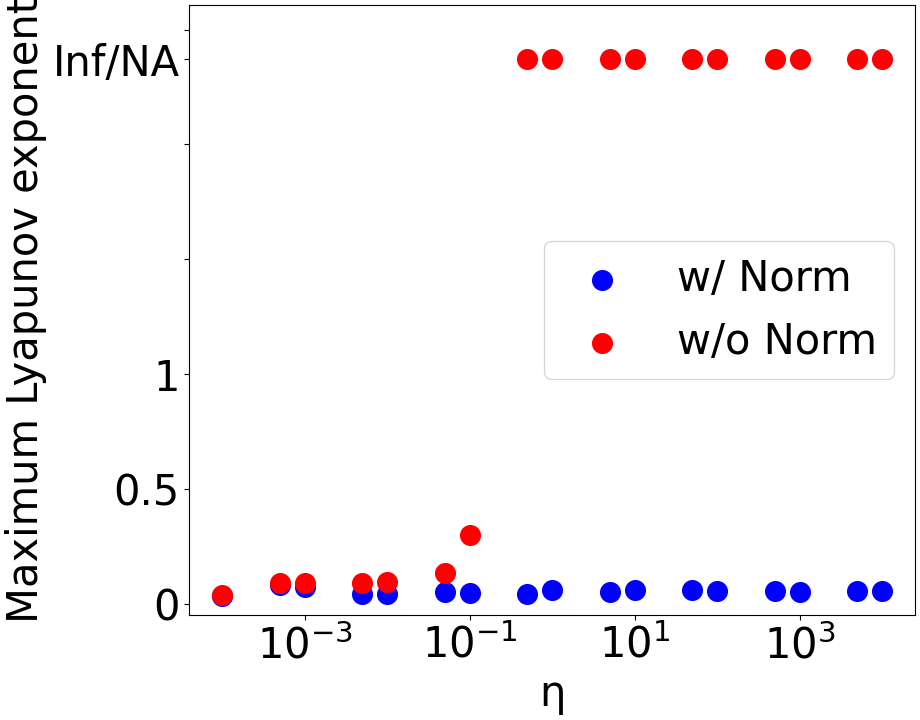}
    \subcaption{Maximum Lyapunov exponent with and without normalization}
\end{minipage}
 \caption{Lyapunov exponent on the CIFAR-10 dataset.}
        \label{fig:cifar10}
\end{figure}

\clearpage

\section{Other details}
\label{appendix:others}
\subsection{Extended related work}
\paragraph{Energy-based understanding.} The Transformer architecture has been a focus of efforts to provide theoretical grounding.
\citet{geshkovski2023mathematical,geshkovski2024emergence,geshkovski2024dynamic} formulated recurrent SA dynamics as interactions among tokens (``particles''), enabling theoretical analysis of phenomena such as meta-stable clustering and rank-one collapse. Their continuous-time dynamics monotonically decrease an energy (Lyapunov) function given by a summation over exponential functions, commonly requiring constraints such as single-head attention, hyperspherical token states, and symmetric weights. \cite{karagodin2024clustering} extended this framework to the case of causal attention masking. \cite{bruno2025emergence} succeeded in mathematically characterizing the meta-stable clustering as a Wasserstein gradient flow of mean-field token dynamics, with the energy serving as its potential function, although they replaced the softmax function with an unnormalized exponential function and restricted their analysis to identity weight matrices.
\citet{yang2022transformers} considered an exponential energy function similar to that of \cite{geshkovski2023mathematical}, describing the Transformer as performing alternating majorization-minimization updates on distinct energy functions. 
Their approach also accommodates discrete state updates and MLP layers, although it entails complex conditions, including constraints on step sizes and proximity to fixed points.
\citet{ramsauer2020hopfield} formalized the cross-attention mechanism as modern Hopfield networks, \citet{hoover2024energy,hu2025hyperspherical} further developed energy functions for Transformers including self-attentions. We do not address approaches based on Hopfield networks in this work, as they require architectural modifications, such as adding auxiliary signal paths that are absent in standard Transformers, which are beyond our scope.

\paragraph{Jacobian-based analysis.}
The Jacobian of state updates is fundamental for characterizing neural network dynamics. For example, it has been used to analyze edge-of-chaos behavior for stable signal propagation and gradient control \citep{boedecker2012information,poole2016exponential,pennington2017resurrecting}.
\citet{haber2017stable} interpreted forward propagation in neural networks as continuous-time dynamical systems and analyzed their Jacobians to prevent exploding and vanishing gradients. \citet{chang2019antisymmetricrnn} extended the ODE-based perspective to recurrent neural networks and proposed using anti-symmetric weight matrices to satisfy discrete-time stability conditions. 
Several studies have explored Jacobian-based regularization techniques. \citet{yoshida2017spectral} proposed spectral norm regularization to reduce sensitivity to input perturbations and improve generalization. \citet{miyato2018spectral} applied spectral normalization to stabilize the training of generative adversarial networks. \citet{lewandowski2024learning} introduced spectral regularization for continual learning, aiming to prevent the loss of plasticity and maintain trainability across tasks by keeping the maximum singular value of each layer close to one.
Regarding SA specifically, \cite{noci2022signal} analyzed Jacobians to explain rank collapse, while \cite{castin2023smooth} evaluated their spectral properties mathematically. In this work, we use Jacobian analysis to understand inference dynamics in realistic SAs and also employ them as regularizers and performance indicators.

\paragraph{Looped architectures.}
Looped architectures in Transformers have been explored since their introduction by~\citet{dehghani2018universal}. One example is weight tying, as seen in the ALBERT model~\citep{lan2019albert}. Equilibrium models~\citep{bai2019deep} use fixed-point solutions, which can be interpreted as infinitely looped computations. \citet{yang2023looped,giannou2023looped} showed that Transformers with looped structures are capable of learning algorithmic tasks. \citet{saunshi2025reasoning} further showed that looped architectures enhance reasoning ability through strong inductive bias.
As the number of recurrent updates (i.e., loops) increases, performance scales efficiently, a phenomenon we refer to as test-time scaling. \citet{geiping2025scaling} successfully applied test-time scaling to reasoning benchmarks, and \citet{bansal2022end} showed that it enables models to solve problems at test time that are more difficult than those seen during training. 
\citet{miyato2024artificial} proposed artificial Kuramoto oscillatory neurons (AKOrN), a looped architecture that successfully solves tasks in a neuroscience-inspired manner, demonstrating strong empirical results in unsupervised object discovery, adversarial robustness, calibrated uncertainty quantification, and reasoning.

\subsection{Details of preliminaries}
\label{app:prelim}
\paragraph{Energy-based analysis by~\citet{yang2022transformers}}
\citet{yang2022transformers} formalized updates of SA using alternating inexact minimization algorithm as:
\begin{align}
    \bm{X}^{(t+1)} = \operatorname{softmax}_{\beta}(\bm{X}^{(t)}\bm{W}^{s}\bm{X}^{(t)\top})\bm{X}^{(t)}\bm{W}^{s},
\end{align}
where $\bm{W}^{s}\in \mathbb{R}^{D\times D}$ is a symmetric matrix and  $\operatorname{softmax}_{\beta}$ is a function reweighted with coefficient vector $\beta$.

\paragraph{Operation on oscillators}
We use $\widetilde{\bm{X}}_{i,j}$ to refer to the $j$-th oscillator of the $i$-th token of $X$, which is defined as $\widetilde{\bm{X}}_{i,j} = \bm{X}_{[i,(j-1)N+1:jN]}\in \mathbb{R}^{N}$.
They are defined as:
\begin{align}
   \mywidetilde{\operatorname{Omg}^{(\text{osc})}(\bm{X}^{(t)})}_{i,j} = \Omega_{j} \widetilde{\bm{X}}_{i,j}, \;  \mywidetilde{\operatorname{Proj}_{X}^{(\text{osc})}(\bm{Y})}_{i,j} = \left( I_{N} - \widetilde{\bm{X}}_{i,j} \widetilde{\bm{X}}_{i,j}^\top \right) \widetilde{\bm{Y}}_{i,j}, \;
\mywidetilde{\Pi^{(\text{osc})}(\bm{Y})}_{i,j} = \frac{\widetilde{\bm{Y}}_{i,j}}{\|\widetilde{\bm{Y}}_{i,j}\|}, 
\end{align}

AKOrN then uses a \text{readout module} to read out patterns independent of the phase.
\begin{align}
 \bm{C}' = \bm{g}(\bm{m})\in \mathbb{R}^{D\times N}, m_{k} = \|\bm{z}_{k}\|, \bm{z}_{k} = \sum_{i} \bm{U}_{kij}\widetilde{\bm{X}}_{i,j}\in \mathbb{R}^{N'},
\end{align}
where $\bm{U}_{kij}\in\mathbb{R}^{N'\times N}$ is a learned weight matrix, $\bm{g}$ is a learned function and $k=1\cdots D N$.

\end{document}